\title{\textbf{Sparsity-Based Interpolation of External, Internal and Swap Regret}}
\author{
Zhou Lu\thanks{Equal contribution, alphabetical order.}\\
Princeton University\\
\texttt{zhoul@princeton.edu}\\
\and
Y. Jennifer Sun\footnotemark[1]\\
Princeton University\\
\texttt{ys7849@princeton.edu}\\
\and
Zhiyu Zhang\footnotemark[1]\\
Harvard University\\
\texttt{zhiyuz@seas.harvard.edu}\\
}
\date{\vspace{-5ex}}
\begin{document}
\maketitle

\begin{abstract}
Focusing on the expert problem in online learning, this paper studies the interpolation of several performance metrics via $\phi$-regret minimization, which measures the total loss of an algorithm by its regret with respect to an arbitrary action modification rule $\phi$. With $d$ experts and $T\gg d$ rounds in total, we present a single algorithm achieving the instance-adaptive $\phi$-regret bound
\begin{equation*}
\tilde O\rpar{\min\left\{\sqrt{d-d^\unif_\phi+1},\sqrt{d-d^\self_\phi}\right\}\cdot\sqrt{T}},
\end{equation*}
where $d^\unif_\phi$ is the maximum amount of experts modified identically by $\phi$, and $d^\self_\phi$ is the amount of experts that $\phi$ trivially modifies to themselves. By recovering the optimal $O(\sqrt{T\log d})$ external regret bound when $d^\unif_\phi=d$, the standard $\tilde O(\sqrt{T})$ internal regret bound when $d^\self_\phi=d-1$ and the optimal $\tilde O(\sqrt{dT})$ swap regret bound in the worst case, we improve upon existing algorithms in the intermediate regimes. In addition, the computational complexity of our algorithm matches that of the standard swap-regret minimization algorithm due to \cite{blum2007external}.

Technically, building on the well-known reduction from $\phi$-regret minimization to external regret minimization on stochastic matrices, our main idea is to further convert the latter to online linear regression using Haar-wavelet-inspired matrix features. Then, by associating the complexity of each $\phi$ instance with its sparsity under the feature representation, we apply techniques from comparator-adaptive online learning to exploit the sparsity in this regression subroutine.
\end{abstract}

\section{Introduction}

We consider the distributional version of \emph{Learning from Expert Advice} (LEA), which is a two-player repeated game between a learning algorithm and an adversary. Let $d\in\N_+$ be the total number of experts. In each ($t$-th) round, the two players interact as follows. 
\begin{enumerate}
\item The learning algorithm picks a distribution $p_t\in\Delta(d)$ over the $d$ experts, where $\Delta(d)\subset\R^d$ denotes the probability simplex, i.e., $\Delta(d)=\{p\in\mathbb{R}^d:p\ge 0, \|p\|_1=1\}$. 
\item The adversary observes $p_t$ and picks a vector $l_t\in[-1,1]^d$ specifying the losses of all experts.
\item The algorithm observes $l_t$ and suffers a loss defined by the inner product $\inner{p_t}{l_t}$.
\item The adversary determines whether the game terminates. If so, let $T$ be the total number of rounds.
\end{enumerate}
Targeting the regime of $T\gg d$, our goal is to design a learning algorithm that suffers low total loss $\sum_{t=1}^T\inner{p_t}{l_t}$, without any additional assumption on the adversary.

It is well-known that such an objective can be approached by minimizing certain comparative performance metrics. Specifically, we study a class of performance metrics called the $\phi$\emph{-regret}, due to \cite{greenwald2003general}. Let $\calS(d)$ be the collection of all linear functions mapping $\Delta(d)$ to itself, which can be equivalently expressed as $d$-by-$d$ right stochastic matrices. With any $\phi\in\calS(d)$ called an \emph{action modification rule}, the $\phi$-regret is defined as
\begin{equation*}
\reg_T(\phi)\defeq\sum_{t=1}^T\inner{p_t}{l_t}-\sum_{t=1}^T\inner{\phi(p_t)}{l_t}. 
\end{equation*}
Intuitively, the $\phi$-regret compares the total loss of the algorithm at the end of the game to the total loss it would have obtained, had it transformed all its past actions according to $\phi$ (while assuming the adversary's past actions remain the same). If an algorithm guarantees that $\reg_T(\phi)\leq f(\phi)$ for some function $f$, then by definition, its total loss can be upper-bounded by the oracle inequality
\begin{equation}\label{eq:oracle}
\sum_{t=1}^T\inner{p_t}{l_t}\leq \min_{\phi\in\calS(d)}\spar{\sum_{t=1}^T\inner{\phi(p_t)}{l_t}+f(\phi)}.
\end{equation}

The $\phi$-regret can be viewed as the interpolation of several important performance metrics:
\begin{itemize}
\item The most common \emph{external regret} (or simply known as ``the'' regret) equals the supremum of $\reg_T(\phi)$ over all the $\phi$'s that are constant functions. The celebrated \emph{Multiplicative Weight Update} (MWU) algorithm \citep{littlestone1994weighted,cesa1997use} achieves the optimal external regret upper bound, $O(\sqrt{T\log d})$. 
\item The \emph{internal regret} \citep{foster1999regret} equals the supremum of $\reg_T(\phi)$ over all the $\phi$'s that map exactly $d-1$ vertices of $\Delta(d)$ to themselves.\footnote{In other words, such a $\phi$ transforms the mass of $p_t$ on one of the experts to the others.} This is often motivated by game-theoretic applications, while \cite{stoltz2005internal} proved its quantitative advantage over the external regret in certain prediction problems. Running MWU over the targeted $\phi$-class guarantees the standard internal regret bound, $O(\sqrt{T\log d})$ \citep[Chapter~4.4]{cesa2006prediction}. 
\item The \emph{swap regret} \citep{blum2007external} is defined as the supremum of $\reg_T(\phi)$ over the entire $\calS(d)$. By construction it is at most $d$ times the internal regret bound, but an even better $O(\sqrt{dT\log d})$ upper bound can be achieved via a well-known, computationally efficient \emph{swap-to-external reduction} \citep{blum2007external}. For the distributional version of LEA, this is recently shown to be optimal in the regime of $T\gg d$ we consider \citep{dagan2024external,peng2024fast}. 
\item The \emph{quantile regret} \citep{chaudhuri2009parameter} compares the total loss of the algorithm to the total loss of the $\ceil{\eps d}$-th best expert. In the language of the $\phi$-regret, this amounts to considering the $\phi$'s that are not only constant, but also close to outputting the uniform distribution. It is known that the optimal upper bound is $O(\sqrt{T\log\eps^{-1}})$ \citep{orabona2016coin,negrea2021minimax}, which recovers the $O(\sqrt{T\log d})$ external regret as its own worst case ($\eps=1/d$).
\end{itemize}

\paragraph{Main result} In this paper, we present a single $\phi$-regret minimization algorithm (Algorithm~\ref{algorithm:main} in Appendix~\ref{section:code}) that ties the above regimes together. Compared to the naive approach of aggregating the above specialized algorithms by MWU, our proposed algorithm achieves a $\phi$-regret bound that depends on suitable complexity measures of each $\phi$ instance, leading to a sharper oracle inequality, Eq.(\ref{eq:oracle}). Specifically, with $e^{(i)}\in\R^d$ representing the canonical basis vector along the $i$-th coordinate, the complexity of $\phi$ is measured by the following two definitions. 

\begin{definition}[Uniformity]\label{definition:uniformity}
The \emph{uniformity} of $\phi\in\calS(d)$, denoted by $d^\unif_\phi$, is defined as the frequency of the most frequent element in the size-$d$ multiset $\{\phi(e^{(1)}),\ldots,\phi(e^{(d)})\}$. 
\end{definition}

\begin{definition}[Degree of self-map]
The \emph{degree of self-map} of $\phi\in\calS(d)$, denoted by $d^\self_\phi$, is defined as the amount of indices $i$ satisfying $\phi(e^{(i)})=e^{(i)}$. 
\end{definition}

In plain words, $d^\unif_\phi$ measures the maximum number of experts modified identically by $\phi$, while $d^\self_\phi$ measures the number of experts modified trivially by $\phi$ (i.e., modified to themselves). One could also interpret $d^\unif_\phi$ as the similarity between $\phi$ and constant functions, while $d^\self_\phi$ represents the similarity between $\phi$ and the \emph{self-map}. In their easiest settings, $d^\unif_\phi=d$ and $d^\self_\phi=d-1$ respectively recover the $\phi$-classes of the external regret and the internal regret, therefore accordingly, one would expect the $\phi$-regret to be decreasing with respect to $d^\unif_\phi$ and $d^\self_\phi$. In the other extreme we have $d^\unif_\phi=1$ and $d^\self_\phi=0$, and a sensible algorithm needs to still guarantee the optimal swap regret bound, $\tilde O(\sqrt{dT})$. 

As a concrete realization of this reasoning, our algorithm guarantees the $\phi$-regret bound
\begin{equation*}
\reg_T(\phi)=\tilde O\rpar{\min\left\{\sqrt{d-d^\unif_\phi+1},\sqrt{d-d^\self_\phi}\right\}\cdot\sqrt{T}},
\end{equation*}
as well as the optimal quantile regret bound $O(\sqrt{T\log\eps^{-1}})$. This corresponds to a natural interpolation of the $O(\sqrt{T\log d})$ external regret, the $\tilde O(\sqrt{T})$ internal regret and the $\tilde O(\sqrt{dT})$ swap regret, only sacrificing (multiplicatively) a constant factor compared to MWU and a polylog factor compared to \citep[Chapter~4.4]{cesa2006prediction} and \citep{blum2007external}, in their respective specialized regimes. 

Finally, from the computational perspective, both the runtime and the memory of our algorithm are of the same order as the algorithm from \citep{blum2007external}. That is, without sacrificing the computational complexity, our algorithm is an instance-adaptive improvement over this classical result on swap regret minimization. 

\paragraph{Techniques} Our algorithm is based on the following idea. It is already known that $\phi$-regret minimization in LEA can be reduced to an external regret minimization problem on the stochastic matrix space $\calS(d)$ \citep{gordon2008no}, for which solutions can be built as variants of mirror descent. Deviating from the latter part of this standard procedure, we apply a linear transform on the unconstrained matrix space $\R^{d\times d}$, converting all the comparing $\phi$'s to their corresponding \emph{transform domain coefficients}. If $\phi$ is sparse on the transform domain, then we see it as ``simple'' -- just like the intuition from Fourier transform, where time series consisting of very few frequencies are considered simple. Then, since learning a hypothesis $\phi$ is equivalent to learning its transform domain coefficient, we can apply a \emph{sparsity-adaptive} online learning algorithm to perform this task, for which there are out-of-the-box options available \citep[Chapter~10]{orabona2025modern}. 

Putting these together, our approach amounts to using a sparsity-adaptive online learning algorithm to solve a particular matrix linear regression subroutine. The crucial step is designing the \emph{features} here (equivalently, the linear transform on the matrix space $\R^{d\times d}$), as we need to ensure its consistency with the inductive bias of our targeted regret bounds. Our construction is based on the \emph{Haar wavelet} \citep{mallat2008wavelet}, whose ability to sparsely represent low-variation signals has enabled several recent advances in online learning \citep{baby2019online,zhang2023unconstrained,jacobsen2024equivalence}. Along the way we address a number of technical challenges to be outlined shortly. 

\subsection{Related Work}

\paragraph{$\Phi$-regret} The $\phi$-regret we study is an instance-dependent version of a better-known concept called the $\Phi$-regret, due to \cite{greenwald2003general}. With respect to any class $\Phi\subset\calS(d)$ of action modification rules, the $\Phi$-regret is defined as the supremum of $\reg_T(\phi)$ over all $\phi\in\Phi$. Due to its generality unifying external, internal and swap regret, further developments have been presented in numerous works afterwards, particularly regarding its connection to various equilibrium concepts in game theory \citep{stoltz2007learning,rakhlin2011online,piliouras2022evolutionary,bernasconi2023constrained,cai2024tractable,zhang2024efficient}. 

Technically, we build on the well-known reduction from swap regret to external regret on the extended domain $\calS(d)$, pioneered by \cite{stoltz2005internal,blum2007external,gordon2008no} and further developed by \cite{ito2020tight}. In a conceptually different manner, one may also analyze the swap regret through the \emph{subsequence regret} \citep{lehrer2003wide,roth2023learning}, and unifying algorithms have been proposed based on certain multi-objective formulations of online learning \citep{lee2022online,haghtalab2023calibrated,noarov2023high} related to the $L_\infty$-norm \emph{Blackwell approachability} of the negative orthant \citep{blackwell1956analog,perchet2015exponential,shimkin2016online}. The key idea here is to convert the subsequence regret to the regret of a ``meta'' LEA algorithm that reweighs different subsequences. In particular, utilizing the meta algorithm of \cite{chen2021impossible}, the approach of \cite{roth2023learning} can achieve an instance-dependent refinement of the $\tilde O(\sqrt{T})$ internal regret bound (see Appendix~\ref{section:discussion}). Generic reductions between Blackwell approachability and no-regret learning have also been studied by \cite{abernethy2011blackwell,dann2023pseudonorm,dann2024rate}.

A recent breakthrough of \cite{dagan2024external} and \cite{peng2024fast} showed that the $d$-dependence of the time-averaged $\tilde O(\sqrt{d/T})$ swap regret bound can be improved exponentially, at the price of an exponentially worse dependence on $T$. This is orthogonal to the regime of $T\gg d$ we consider, but very intriguingly, their algorithms are also based on some sort of multi-resolution analysis. Closer to our setting, they showed that the $\tilde O(\sqrt{dT})$ swap regret is optimal in the distributional version of LEA with $T\gg d$, answering an open problem from earlier works. 

\paragraph{Adaptive online learning} This work aligns with the general theme of adaptivity in online learning, e.g., \citep{mcmahan2014unconstrained,foster2015adaptive,cutkosky2018algorithms,mhammedi2020lipschitz}, whose main idea is to achieve regret bounds that scale with a priori unknown complexity measures of the problem instance. Existing approaches can be roughly categorized into two types: the first type uses a generic expert meta algorithm to aggregate multiple copies of a ``non-adaptive'' base algorithm (e.g., mirror descent with fixed learning rate) with different hyperparameter setups \citep{foster2017parameter,chen2021impossible}, while the second type uses specialized designs to eschew the expert aggregation \citep{duchi2011adaptive,luo2015achieving,zhang2024improving,cutkosky2024fully}, trading generality for computational efficiency. Our approach belongs to the second type. 

\paragraph{Features and sparsity} Our techniques are inspired by recent advances in \emph{dynamic regret} minimization, which is the hardest regret notion that compares to an arbitrary sequence of predictions selected in hindsight. \cite{zhang2023unconstrained} presented a reduction from dynamic regret to external regret on the extended domain $\R^T$, which is reminiscent of the swap-to-external reduction discussed above; also see \citep{jacobsen2024equivalence}. Central to this technique is the use of features to associate the complexity of a hypothesis to the sparsity of its linear representations. This suggests viewing the features of \citep{blum2007external} as the canonical matrix basis, and our main conceptual takeaway is that the Haar wavelet, introduced to online learning by \cite{baby2019online}, produces matrix features that are better-aligned with the inherent structure of the external and internal regret. 

\paragraph{Technical challenges} We now highlight the technical challenges of this work along the lines of the above discussion. Regardless of the choice of features, the dynamic regret bound of \cite{zhang2023unconstrained} is $\tilde O(\sqrt{T}\cdot\sqrt{T})$ in the worst case, where one of the $\sqrt{T}$ is the iconic rate of online linear optimization, and the other one comes from the dimensionality of the extended domain $\R^T$. By default, it means the analogous approach for our problem would result in the suboptimal $\tilde O(\sqrt{d^2}\cdot\sqrt{T})$ swap regret. The key subtlety here is that the surrogate gradients from the swap-to-external reduction are well-structured, such that one could use \emph{``first-order'' gradient-adaptivity} to further shave off a $\sqrt{d}$ factor \citep{blum2007external}. But this does not hold true for arbitrary features anymore. 

Regarding this issue, we show that the Haar wavelet is particularly ``compatible'' with the gradient structure from the swap-to-external reduction, such that the optimal $\tilde O(\sqrt{dT})$ swap regret bound can still be recovered after incorporating specific Haar-wavelet-type matrix features. Our approach also involves a complexity-preserving augmentation of the hypothesis class $\calS(d)$ (Section~\ref{subsection:preprocessing}), as well as a projection technique that enforces the constraint $\calS(d)$ (Section~\ref{subsection:constraint}). Both are nontrivial constructions absent from \citep{zhang2023unconstrained}. 

\subsection{Notation}

Throughout this paper, $\inner{x}{y}$ denotes the inner product of generic $x$ and $y$. Specifically when acting on two matrices, it equals the Euclidean inner product of their vectorizations. $x\otimes y$ denotes the outer product of vectors. An interval (of integers) $[a:b]$ is defined as $\{a,a+1,\ldots,b-1,b\}$. $\log$ denotes the natural logarithm if the base is omitted. $\bm{I}_d$ represents the $d$-by-$d$ identity matrix. We use the following indexing rule: on a matrix $x$, $x_i$ represents its $i$-th row while $x_{i,j}$ represents its ($i,j$)-th entry; on a vector $x$, $x_i$ represents its $i$-th entry. 

\section{Technical Ingredients}\label{section:ingredients}

The starting point of our approach is the celebrated $\phi$-to-external reduction of \cite{gordon2008no}. This is remarkably simple, with similar ideas also presented in several works around the same time \citep{stoltz2005internal,blum2007external}. First, let us assume access to a computational oracle which, given any input $\phi\in\calS(d)$, returns a fixed point $p\in\Delta(d)$ satisfying $p=\phi(p)$. The existence of $p$ is due to Brouwer's fixed-point theorem, and in practice, it can be obtained using $O(d^3)$ time and $O(d^2)$ memory via a generic linear system solver.\footnote{In theory, the asymptotic time complexity of exact fixed-point computation can be reduced via fast matrix multiplication, although the associated algorithms are somewhat less practical. There are also approaches that approximate the fixed point instead, e.g., via the power iteration \citep{greenwald2008more,mohri2017online}; also see \citep[Appendix~I.3]{zhang2024efficient}.} 

Now, regarding the LEA problem, suppose we have some procedure that generates a linear map $\phi_t\in\calS(d)$ in each round. By querying the fixed point oracle, our LEA algorithm simply predicts its output $p_t$ such that $p_t=\phi_t(p_t)$. The instantaneous regret of this algorithm with respect to any comparing action modification rule $\phi^*$ can be expressed as
\begin{equation*}
\inner{p_t-\phi^*(p_t)}{l_t}=\inner{\phi_t(p_t)-\phi^*(p_t)}{l_t}=\inner{p_t\otimes l_t}{\phi_t-\phi^*},
\end{equation*}
where the last step interprets $\phi_t$ and $\phi^*$ as right stochastic matrices. Then, by using the notation $g_t\defeq p_t\otimes l_t$ and taking the summation over $t\in[1:T]$, the $\phi$-regret can be rewritten as
\begin{equation}\label{eq:ggm}
\reg_T(\phi^*)=\sum_{t=1}^T\inner{g_t}{\phi_t-\phi^*}.
\end{equation}
Notice that the RHS is the external regret of our $\phi_t$-predicting procedure in a hypothetical \emph{Online Linear Optimization} (OLO) problem: in each round it predicts $\phi_t\in\calS(d)$, observes the rank-$1$ loss gradient $g_t\in\R^{d\times d}$, and suffers the linear loss $\inner{g_t}{\phi_t}$. The total loss is compared to that of $\phi^*$.

Given this reduction, the convention of the field is to proceed with \emph{Online Mirror Descent} (OMD). For example, by running a separate copy of MWU on each row of $\calS(d)$ (which corresponds to OMD with a ``row-separable'' Bregman divergence), one could obtain an $O(d\sqrt{T\log d})$ swap regret bound via a straightforward summation. A better approach is using a \emph{gradient-adaptive} OMD algorithm to exploit the structure of $g_t$, leading to the optimal $O(\sqrt{dT\log d})$ swap regret bound \citep{blum2007external}. This could be seen as a computationally efficient realization of running MWU on all vertices of $\calS(d)$, which also intuitively justifies the necessity of the $\sqrt{d}$ factor. 

In this paper we deviate from this convention. Instead of using OMD, our algorithm (Algorithm~\ref{algorithm:main}) will be a modular composition of several new components introduced next. 

\subsection{Preprocessing}\label{subsection:preprocessing}

As the first step, we need to assign an ordering to the set of experts and augment their indices into a sequence of dyadic length (i.e., power of $2$). This is due to our later use of the Haar wavelet: the size of the Haar matrix is naturally dyadic, and the ordering of its input affects our intermediate, variation-based regret bound (Theorem~\ref{theorem:external}). 

Concretely, we define $S\defeq\lceil\log_2 d\rceil$ and $\bar d\defeq 2^S$, which means that $\bar d$ is the smallest power of $2$ that is greater or equal to $d$, the total number of experts. To associate the \emph{augmented index set} $[1:\bar d]$ with the original index set $[1:d]$, our algorithm requires a \emph{relabeling function} $\calI$ as the user's input, mapping any $i\in[1:d]$ to a set $\calI(i)\subset[1:\bar d]$. Three conditions should be satisfied: 
\begin{itemize}
\item For all $i\in[1:d]$, $\calI(i)$ only contains consecutive integers in $[1:\bar d]$.
\item For all $i\in[1:d]$, the cardinality of $\calI(i)$ satisfies $1\leq \abs{\calI(i)}\leq 2$.
\item The collection of sets $\{\calI(i);i\in[1:d]\}$ is disjoint, and $\cup_{i=1}^d\calI(i)=[1:\bar d]$. 
\end{itemize}
It is clear that such an $\calI$ exists.\footnote{A simple construction is to keep the ordering of the original $d$ experts unchanged, and make a duplicate for each expert (placed next to it) until the total number of expert is $\bar d$.} Besides, the third condition enables the definition of the ``inverse'' function $\calI^{-1}:[1:\bar d]\rightarrow[1:d]$, such that for all $\bar i\in[1:\bar d]$ we have $\bar i\in\calI(\calI^{-1}(\bar i))$.

Using this construction, we then consider the following \emph{augmented} LEA problem with $\bar d$ experts. Let $\bar l_t\in\R^{\bar d}$ be the vector whose $\bar i$-th entry equals the $\calI^{-1}(\bar i)$-th entry of $l_t$, and any decision $\bar p_t\in\Delta(\bar d)$ for the augmented LEA problem suffers the loss $\inner{\bar p_t}{\bar l_t}$. The point is that such an augmented problem can be made equivalent to the original one: 
\begin{itemize}
\item By letting the $i$-th coordinate of $p_t\in\Delta(d)$ be the total mass of $\bar p_t$ on $\calI(i)$, we always have $\inner{p_t}{l_t}=\inner{\bar p_t}{\bar l_t}$. 
\item For any $\phi^*\in\calS(d)$, there exists $\bar\phi^*\in\calS(\bar d)$ that preserves the complexity of $\phi^*$ and satisfies $\inner{\phi^*(p_t)}{l_t}=\inner{\bar\phi^*(\bar p_t)}{\bar l_t}$ (Lemma~\ref{lemma:relabeling_uniform} and \ref{lemma:relabeling_self}). 
\end{itemize}
Therefore based on the reduction of \cite{gordon2008no}, it suffices to consider external regret minimization on the \emph{augmented stochastic matrix space} $\calS(\bar d)$ instead. 

To proceed, we use bar-equipped notations analogous to Eq.(\ref{eq:ggm}): let $\bar \phi_t$ be our matrix prediction on $\calS(\bar d)$, with $\bar p_t\in\Delta(\bar d)$ being its fixed point. Then, with $\bar g_t\defeq \bar p_t\otimes \bar l_t$, our goal next is to design a $\bar \phi_t$-predicting procedure which ensures that for all $\bar \phi^*\in\calS(\bar d)$, $\sum_{t=1}^T\inner{\bar g_t}{\bar \phi_t-\bar \phi^*}$ is upper-bounded by an appropriate function of $\bar \phi^*$. 

\subsection{Wavelet-Inspired Matrix Features}\label{subsection:wavelet}

Our second step requires pretending that \emph{improper} matrix predictions are allowed, i.e., $\bar \phi_t$ can be anything on $\R^{\bar d\times \bar d}$ rather than just $\calS(\bar d)$. Although its fixed point does not exist in general (which is problematic from the perspective of \cite{gordon2008no}), we will show afterwards that the constraint $\calS(\bar d)$ can be enforced through an additional projection step (Section~\ref{subsection:constraint}). 

This leads to the main idea of this work: given any collection of matrices $\calB$ that spans $\R^{\bar d\times\bar d}$, all elements of $\R^{\bar d\times\bar d}$ can be expressed as a linear combination of $\calB$, which then facilitates the use of sparsity to measure the complexity of each comparator instance $\bar\phi^*$. There are just two desiderata when choosing $\calB$: first, it should properly associate different matrix entries such that the resulting external and internal regret is low; and second, it needs to be congruent with the structure of the gradient $\bar g_t$, such that the $\tilde O(\sqrt{dT})$ swap regret is preserved in the worst case. We will show that the classical idea of \emph{Haar wavelet} \citep{mallat2008wavelet} addresses both. 

Specialized to Euclidean spaces, wavelets refer to a systematic construction of orthogonal \emph{multi-resolution} bases, with numerous applications across signal processing, nonparametric statistics and machine learning. Here multi-resolution means that if we examine the inner product of any ``data'' vector with all wavelet basis vectors (i.e., taking the \emph{wavelet transform}), then some of these inner products (i.e., transform domain coefficients) capture the global characteristics of the data, while others capture the local characteristics. Such a representation is extremely useful for common modalities such as images: a typical image would consists of global scenes, local details and pixel-level noises, therefore separately extracting these information would enable applications like denoising and compression. Our algorithm exemplifies the same intuition in an online context: extracting multi-resolution characteristics of the gradient $\bar g_t$ enables better decision making. 

\paragraph{Haar wavelet} Concretely, we use the Haar wavelet basis $\calH$ defined as follows. Recall that our augmented dimensionality satisfies $\bar d=2^S$ for some positive integer $S$. Given any \emph{scale} parameter $s\in[1:S]$ and any \emph{location} parameter $l\in[1:2^{S-s}]$, we define a Haar basis vector $h^{(s,l)}\in\R^{\bar d}$ whose $i$-th entry is
\begin{equation*}
h^{(s,l)}_i\defeq\begin{cases}
1,& i\in[2^s(l-1)+1: 2^s(l-1)+2^{s-1}];\\
-1,& i\in[2^s(l-1)+2^{s-1}+1: 2^sl];\\
0,&\textrm{else}.
\end{cases}
\end{equation*}
Notice that $h^{(s,l)}$ is only nonzero on its \emph{support} $I^{(s,l)}\defeq[2^s(l-1)+1:2^sl]$; on the first half of $I^{(s,l)}$ it equals $1$, and on the second half of $I^{(s,l)}$ it equals $-1$. For each $s$ there are $2^{S-s}$ valid choices of $l$, which means that in total there are $\bar d-1$ pairs of $(s,l)$. Then, we additionally define $h^{(S,0)}$ as the all-one vector in $\R^{\bar d}$, which completes the size-$\bar d$ collection of vectors,
\begin{equation*}
\calH\defeq\left\{h^{(S,0)}\right\}\cup\left\{h^{(s,l)};s\in[1:S], l\in[1:2^{S-s}]\right\}.
\end{equation*}

One could verify that $\calH$ is indeed an orthogonal basis of $\R^{\bar d}$: the all-one vector $h^{(S,0)}$ is orthogonal to all $h^{(s,l)}$; vectors on the same scale have disjoint supports; and for two vectors on different scales, the larger-scale vector remains constant over the support of the smaller-scale vector. Consequently, any vector $v\in\R^{\bar d}$ can be uniquely represented by all the inner products $\inner{v}{h}$, $\forall h\in\calH$. The most important property for our use is that $\inner{v}{h}$ captures the \emph{variability} of $v$ over the support of $h$: if $v$ remains constant over such an interval, then $\inner{v}{h}=0$. In other words, the Haar wavelet basis can sparsely represent low-variation signals. 

\paragraph{Matrix features} Starting from $\calH$, we construct the following collection $\calB$ of \emph{matrix features}, and this will be used in an online linear regression subroutine introduced shortly. Let $\calE=\{e^{(i)}; i\in[1:\bar d]\}$ be the collection of $\bar d$-dimensional unit coordinate vectors. We define
\begin{equation*}
\calB\defeq \left\{\bm{I}_{\bar d}\right\}\cup\left\{h\otimes e; h\in\calH, e\in\calE\right\},
\end{equation*}
where $\bm{I}_{\bar d}$ denotes the $\bar d$-by-$\bar d$ identity matrix. Without $\bm{I}_{\bar d}$, the rest of $\calB$ is an orthogonal basis of the matrix space $\R^{\bar d\times\bar d}$, which can sparsely represent any comparator $\bar\phi^*\in \calS(\bar d)$ with large $d^\unif_{\bar\phi^*}$. The role of $\bm{I}_{\bar d}$ is to help in cases with large $d^\self_{\bar\phi^*}$.

To summarize, we will work with this slightly \emph{overcomplete} collection of matrix features. It means representations are not unique, but this is fine -- as long as there \emph{exists} a sparse representation for the comparator $\bar\phi^*$, i.e., $\bar\phi^*=\sum_{b\in\calB}\Phi^{*,(b)}b$ for some sparse coefficients $\{\Phi^{*,(b)};b\in\calB\}$, we can use the known algorithms introduced next to adapt to it. 

\subsection{First-Order Sparsity-Adaptive Oracle}

Continuing from our improper matrix prediction problem on $\R^{\bar d\times\bar d}$ (the beginning of Section~\ref{subsection:wavelet}; predictions and loss gradients are denoted by $\bar\phi^\improper_t$ and $\bar g^\improper_t$), our third step is to solve it by learning $\{\Phi^{*,(b)};b\in\calB\}$, the representation of the comparator $\bar\phi^*$ on $\calB$. Since $\bar\phi^*$ can be arbitrary, the ``learning'' here more precisely means suffering low regret in the following equivalent problem ``transformed'' by $\calB$. In each round we first predict a coefficient $\Phi_t^{(b)}\in\R$ for all matrix features $b\in\calB$, which results in an improper matrix prediction
\begin{equation}\label{eq:improper}
\bar\phi_t^{\mathrm{improper}}\defeq\sum_{b\in\calB}\Phi_t^{(b)}b\in\R^{\bar d\times\bar d}.
\end{equation}
After observing the corresponding loss gradient $\bar g_t^\improper\in\R^{\bar d\times\bar d}$, we suffer the linear loss $\inner{\bar g_t^\improper}{\bar\phi_t^\improper}=\sum_{b\in\calB}\inner{\bar g_t^\improper}{b}\Phi_t^{(b)}$. The previous matrix-based regret definition $\sum_{t=1}^T\inner{\bar g_t^\improper}{\bar \phi_t^\improper-\bar \phi^*}$ is equivalent to the sum of a one-dimensional regret,
\begin{equation*}
\sum_{b\in\calB}\sum_{t=1}^T\inner{\bar g_t^\improper}{b}(\Phi_t^{(b)}-\Phi^{*,(b)}).
\end{equation*}
In other words, the problem of improper matrix prediction is converted to \emph{online linear regression with linear losses}.

Suppose we simply run gradient descent on each coefficient $\Phi_t^{(b)}$ separately. With the optimal constant learning rate, it guarantees the well-known ``prototypical'' bound $\sum_{t=1}^T\inner{\bar g_t^\improper}{b}(\Phi_t^{(b)}-\Phi^{*,(b)})=O\rpar{\abs{\Phi^{*,(b)}}\sqrt{\sum_{t=1}^T\inner{\bar g_t^\improper}{b}^2}}$ (see, e.g., \citep[Theorem~2.13]{orabona2025modern}), and therefore
\begin{equation}\label{eq:regret_decomposition}
\sum_{t=1}^T\inner{\bar g_t^\improper}{\bar \phi^\improper_t-\bar \phi^*}=O\rpar{\sum_{b\in\calB}\abs{\Phi^{*,(b)}}\sqrt{\sum_{t=1}^T\inner{\bar g_t^\improper}{b}^2}}.
\end{equation}
This is essentially what we need as the RHS depends on the number of nonzero elements within $\{\Phi^{*,(b)};b\in\calB\}$. The only issue (but critical) is the lack of \emph{adaptivity}: the required constant learning rate on each $\Phi_t^{(b)}$ would depend on $\Phi^{*,(b)}$ (more specifically, its absolute value), but we need a single algorithm that simultaneously guarantees the desirable one-dimensional regret bound for all possible values of $\Phi^{*,(b)}$. No ``oracle tuning'' is allowed. 

We address this issue using \emph{comparator-adaptive} OLO algorithms \citep{streeter2012no,mcmahan2014unconstrained,orabona2016coin}, for which an excellent tutorial is given by \citep[Chapter~10]{orabona2025modern}. The idea is that by replacing the one-dimensional gradient descent by suitable instances of \emph{Follow the Regularized Leader} (FTRL), we can bypass the choice of the learning rate, thus concretely achieving a suitable weaker form of the above regret bounds. Specifically, the one-dimensional algorithm we use comes from the following lemma combining \citep[Theorem~4]{zhang2022pde} and \citep[Theorem~5.8]{cutkosky2018algorithms}. 

\begin{lemma}[\citep{cutkosky2018algorithms,zhang2022pde}]\label{lemma:one_d_alg}
Consider the one-dimensional OLO problem where in each round an algorithm makes a decision $x_t\in\R$ and then observes the loss gradient $c_t\in [-G,G]$ for some known Lipschitz constant $G$. Given any hyperparameter $\eps>0$, there exists an algorithm that guarantees
\begin{equation*}
\sum_{t=1}^Tc_t(x_t-u)\leq \sqrt{G^2+G\sum_{t=1}^T\abs{c_t}}\spar{2\eps+2\sqrt{2}\abs{u}\sqrt{\log\rpar{1+\frac{\abs{u}}{\sqrt{2}\eps}}}+4\sqrt{2}\abs{u}},
\end{equation*}
for all time horizon $T\in\N_+$, all comparator $u\in\R$ and all possible $c_{1:T}$ sequence. 
\end{lemma}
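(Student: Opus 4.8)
The plan is to obtain the algorithm by a black-box composition of the two cited ingredients, so the proof reduces to substitution and constant-tracking. As the inner algorithm $\mathcal{A}_0$ I would take the one-dimensional parameter-free algorithm of \cite{zhang2022pde} (their Theorem~4): run on loss gradients confined to $[-1,1]$, it guarantees, for every comparator $u\in\R$ and every gradient sequence, a regret bound whose entire comparator dependence sits in the factor $2\eps+2\sqrt{2}\abs{u}\sqrt{\log(1+\abs{u}/(\sqrt{2}\eps))}+4\sqrt{2}\abs{u}$, the complementary factor being, up to universal constants, $\sqrt{1+\sum_t c_t^2}$. The reason for insisting on the PDE-based construction rather than a classical coin-betting algorithm is precisely that its logarithm carries only the clean ratio $\abs{u}/\eps$ --- no $\log T$, no double logarithm --- which is what ultimately removes any $\log T$ from the final bound.

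Since $G$ is known, I would then wrap $\mathcal{A}_0$ with the Lipschitz-scaling reduction of \cite{cutkosky2018algorithms} (their Theorem~5.8): feed $\mathcal{A}_0$ the normalized gradients $c_t/G\in[-1,1]$, let $y_t$ be its output, and play $x_t\defeq y_t$. Then the per-round regret satisfies $c_t(x_t-u)=G\cdot(c_t/G)(y_t-u)$, so summing over $t$ and invoking $\mathcal{A}_0$'s guarantee on the unit-Lipschitz gradients $c_t/G$ (with the same comparator $u$ and the same, unrescaled hyperparameter $\eps$) yields $\sum_{t=1}^T c_t(x_t-u)\le G\sqrt{1+\sum_{t=1}^T(c_t/G)^2}\cdot\spar{2\eps+2\sqrt{2}\abs{u}\sqrt{\log(1+\abs{u}/(\sqrt{2}\eps))}+4\sqrt{2}\abs{u}}$. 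Rewriting $G\sqrt{1+\sum_t(c_t/G)^2}=\sqrt{G^2+\sum_t c_t^2}$ and using $\abs{c_t}\le G\Rightarrow c_t^2\le G\abs{c_t}$ bounds this by $\sqrt{G^2+G\sum_t\abs{c_t}}\cdot\spar{\cdots}$, which is exactly the claimed inequality; it holds for all $T\in\N_+$, $u\in\R$ and $c_{1:T}$ because both ingredients are anytime and uniform over comparators.

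The step I expect to be the main obstacle is not conceptual but a matter of careful bookkeeping: pinning down that the universal constants produced by \cite{zhang2022pde} combine with the reduction of \cite{cutkosky2018algorithms} to give \emph{exactly} the factor $2\eps+2\sqrt{2}\abs{u}\sqrt{\log(1+\abs{u}/(\sqrt{2}\eps))}+4\sqrt{2}\abs{u}$ --- in particular the $\sqrt{2}$ inside the logarithm and the additive $2\eps$ surviving the $G$-normalization unchanged --- and that the leading ``$G^2$'' under the square root is the image of the ``$+1$'' that $\mathcal{A}_0$ carries to keep the bound meaningful when $\sum_t\abs{c_t}$ is small. If Theorem~4 of \cite{zhang2022pde} is stated with a horizon ($T$) rather than a squared-gradient dependence, one additionally invokes the standard first-order refinement inside \cite{cutkosky2018algorithms}'s reduction to manufacture the $\sum_t\abs{c_t}$ term; in all cases no new idea beyond interlocking the off-the-shelf pieces is needed.
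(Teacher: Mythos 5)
Your high-level plan — compose Theorem~4 of \cite{zhang2022pde} with Theorem~5.8 of \cite{cutkosky2018algorithms} — is exactly the paper's intent, but the division of labor you assign to the two ingredients is reversed and the description of the second ingredient collapses its nontrivial content. You assert that the PDE-based algorithm of \cite{zhang2022pde} already delivers a second-order gradient-adaptive factor $\sqrt{1+\sum_t c_t^2}$, and you cast Cutkosky's Theorem~5.8 as nothing more than feeding the rescaled gradients $c_t/G$ into the base algorithm and moving a factor of $G$ outside. Both halves of this are off. As the paper states explicitly, \cite[Theorem~4]{zhang2022pde} is the state of the art for comparator-adaptive OLO \emph{without} gradient adaptivity; it produces a factor that scales with the raw horizon, essentially $\sqrt{G^2 T}$, not $\sqrt{G^2+\sum_t c_t^2}$. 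If it already gave the second-order form, there would be nothing left for Cutkosky's theorem to do, and indeed your route would then prove a strictly \emph{stronger} bound than the lemma claims (second-order rather than first-order adaptivity), which should have been a red flag.

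Conversely, Cutkosky's Theorem~5.8 is not the Lipschitz normalization $c_t\mapsto c_t/G$ — that rescaling is a triviality that any of these algorithms silently performs. The theorem is the black-box \emph{first-order} adaptivity wrapper: it is the mechanism that replaces the $G^2 T$ under the square root by $G^2 + G\sum_t\abs{c_t}$ while preserving the comparator-dependent factor. Concretely, that wrapper does not simply rescale the gradients; it advances the inner algorithm's effective time by an amount proportional to $\abs{c_t}/G$ per round (so that the inner horizon becomes $O(1+\sum_t\abs{c_t}/G)$ rather than $T$), and one must check that this time-warping commutes with the comparator-adaptive bound, i.e., that the factor $2\eps+2\sqrt{2}\abs{u}\sqrt{\log(1+\abs{u}/(\sqrt{2}\eps))}+4\sqrt{2}\abs{u}$ survives unaltered and the leading $1$ becomes the $G^2$ term. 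Your closing hedge gestures at precisely this (``one additionally invokes the standard first-order refinement inside the reduction'') but leaves it undeveloped; since the hedge is the actual proof and the primary route rests on a misstatement of what \cite[Theorem~4]{zhang2022pde} provides, the proposal as written has a genuine gap rather than a bookkeeping issue.
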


We remark that \citep[Theorem~4]{zhang2022pde} is the state-of-the-art result for comparator-adaptive OLO \emph{without} gradient adaptivity, while \citep[Theorem~5.8]{cutkosky2018algorithms} enhances it to \emph{first-order gradient adaptivity} in a black-box manner (i.e., $G^2T$ in the regret bound is improved to $G^2+G\sum_{t=1}^T\abs{c_t}$, following the notations of Lemma~\ref{lemma:one_d_alg}). As shown by \cite{blum2007external}, first-order gradient adaptivity is crucial for achieving the optimal $\tilde O(\sqrt{dT})$ swap regret bound. Our analysis additionally shows that such gradient adaptivity works harmoniously with the Haar wavelet, such that the $\tilde O(\sqrt{dT})$ swap regret bound can still be achieved with suitable matrix features.

\subsection{Enforcing the Constraint}\label{subsection:constraint}

Our final step is to close the only remaining gap in the above reasoning: $\bar\phi_t^\improper$ does not belong to $\calS(\bar d)$. Addressing this issue requires a suitable wrapper: given the algorithm from the previous step equipped with an upper bound on $\sum_{t=1}^T\inner{\bar g_t^\improper}{\bar \phi^\improper_t-\bar \phi^*}$, we design mappings $\bar \phi^\improper_t\rightarrow \bar \phi_t\in\calS(\bar d)$ called a \emph{projection oracle} and $\bar g_t\rightarrow\bar g_t^\improper$ called a \emph{gradient processing oracle}, such that (Lemma~\ref{lemma:regret_wrapper})
\begin{equation*}
\sum_{t=1}^T\inner{\bar g_t}{\bar \phi_t-\bar \phi^*}\leq \sum_{t=1}^T\inner{\bar g_t^\improper}{\bar \phi^\improper_t-\bar \phi^*}. 
\end{equation*}
The LHS is equivalent to $\reg_T(\phi^*)$, while the RHS adapts to the complexity of $\phi^*$ (measured with respect to $\calB$). Furthermore, the processed gradient $\bar g_t^\improper$ should be low enough in a suitable notion of magnitude (Lemma~\ref{lemma:gradient_norm}). 

To this end, we employ the following two-stage procedure that extends a classical OLO-to-LEA wrapper\footnote{\cite{luo2015achieving,orabona2016coin} converted the problem of LEA (i.e., OLO on the domain $\Delta(d)$) to ``unconstrained'' OLO on the domain $\R^d$. A more general treatment is given by \cite{cutkosky2018black}.} \citep{luo2015achieving,orabona2016coin} to the matrix setting. 
\begin{align}
\bar \phi^\improper_t\in\R^{\bar d\times\bar d}\xrightarrow[]{\quad\text{Stage 1}\quad}\bar \phi^+_t\in\R_+^{\bar d\times\bar d}\xrightarrow[]{\quad\text{Stage 2}\quad}&~\bar \phi_t\in\calS(\bar d)\tag{Projection}\\
&\Downarrow\nonumber\\
\bar g^\improper_t\xleftarrow[]{\quad\text{Stage 1}\quad}\bar g^+_t\xleftarrow[]{\quad\text{Stage 2}\quad} & ~\bar g_t\tag{Gradient processing}
\end{align}
The first stage enforces the positivity constraint by mapping $\bar \phi^\improper_t$ to an intermediate prediction $\bar \phi^+_t\in\R_+^{\bar d\times\bar d}$, while the second stage enforces the stochastic matrix constraint by mapping $\bar \phi^+_t$ to $\bar \phi_t$. Accordingly, the gradient processing is also performed in two stages, $\bar g_t\rightarrow\bar g^+_t\rightarrow\bar g_t^\improper$. The details are deferred to Appendix~\ref{subsection:detail_constraint}. 

With that we have presented all the technical ingredients of our algorithm, as well as the high-level analytical strategy. The next step is to combine all the pieces into our main results. 

\section{Main Result}\label{section:main}

Our main results are Algorithm~\ref{algorithm:main} and its $\phi$-regret bound (Theorem~\ref{theorem:main}). The pseudocode is deferred to Appendix~\ref{section:code}, while Figure~\ref{fig:algorithm} illustrates its main idea. We note that since $\abs{\calB}=\Theta(d^2)$, our algorithm uses $O(d^2)$ memory and $O(d^2)+\textsc{FP}_d$ time per round, where $\textsc{FP}_d$ denotes the time complexity of the fixed-point oracle. As $\textsc{FP}_d$ dominates $O(d^2)$ in general, the computational bottleneck is the fixed-point computation. These exactly match the standard swap-regret minimization algorithm from \citep{blum2007external}. 

Another remark is that although our algorithm requires the relabeling function $\calI$ as the user's input (Section~\ref{subsection:preprocessing}), all results in this section hold uniformly for all $\calI$ (i.e., the multiplying constants $c$ in the theorems do not depend on $\calI$). The role of good $\calI$ is discussed in Remark~\ref{remark:relabeling}. 

\begin{figure}[ht]
    \centering
    \includegraphics[width=0.8\linewidth]{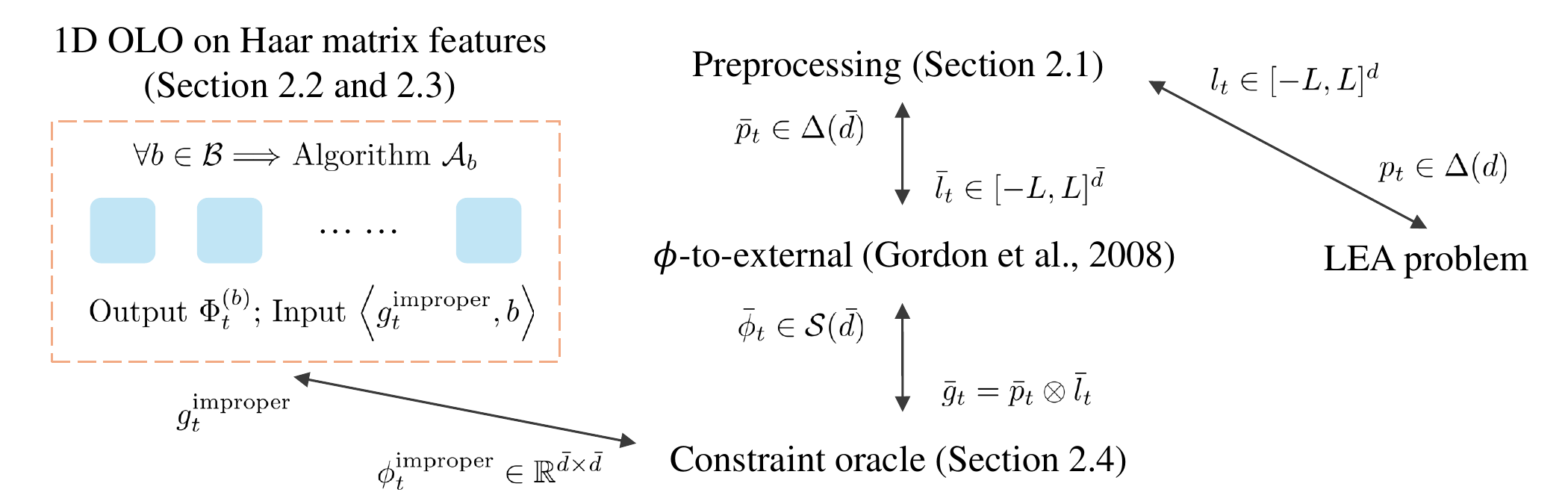}
    \caption{An illustration of Algorithm~\ref{algorithm:main}; see Appendix~\ref{section:code} for details. }
    \label{fig:algorithm}
\end{figure}

\begin{theorem}[Main result]\label{theorem:main}
There is an absolute constant $c>0$ such that for all $T\in\N_+$ and all comparing action modification rule $\phi^*\in\calS(d)$, Algorithm~\ref{algorithm:main} guarantees
\begin{equation*}
\reg_T(\phi^*)\leq c\cdot\rpar{\sqrt{\min\left\{d-d^\unif_{\phi^*}+1,d-d^\self_{\phi^*}\right\}(T+d)}\cdot(\log d)^{3/2}}.
\end{equation*}
\end{theorem}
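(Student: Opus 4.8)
The plan is to chain the four reductions of Section~\ref{section:ingredients} and then carefully sum the one-dimensional regret bounds of Lemma~\ref{lemma:one_d_alg}. Fix $\phi^*\in\calS(d)$; we may assume $\phi^*\neq\bm{I}_d$, since otherwise $\reg_T(\phi^*)=0$ because the algorithm always predicts a fixed point of $\phi_t$ (and the claimed bound is $0$ when $d^\self_{\phi^*}=d$). First, invoke the preprocessing equivalence of Section~\ref{subsection:preprocessing}: there is $\bar\phi^*\in\calS(\bar d)$ with $\bar d-d^\unif_{\bar\phi^*}=O(d-d^\unif_{\phi^*}+1)$ and $\bar d-d^\self_{\bar\phi^*}=O(d-d^\self_{\phi^*})$ (Lemmas~\ref{lemma:relabeling_uniform} and~\ref{lemma:relabeling_self}) such that, via Eq.(\ref{eq:ggm}), $\reg_T(\phi^*)$ equals the external regret $\sum_{t=1}^T\inner{\bar g_t}{\bar\phi_t-\bar\phi^*}$ of the augmented $\bar\phi_t$-predicting procedure. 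Next, the projection/gradient-processing wrapper (Lemma~\ref{lemma:regret_wrapper}) gives $\sum_{t=1}^T\inner{\bar g_t}{\bar\phi_t-\bar\phi^*}\leq\sum_{t=1}^T\inner{\bar g_t^\improper}{\bar\phi_t^\improper-\bar\phi^*}$, and Lemma~\ref{lemma:gradient_norm} is to be used to certify that, up to lower-order corrections introduced by the gradient processing, $G_{h,j}\defeq\max_t\abs{\inner{\bar g_t^\improper}{h\otimes e^{(j)}}}=O(L)$ with $\sum_t\abs{\inner{\bar g_t^\improper}{h\otimes e^{(j)}}}=O\rpar{L\sum_t\abs{\inner{\bar q_t}{h}}}$ for some $\bar q_t\in\Delta(\bar d)$, and likewise $\max_t\abs{\inner{\bar g_t^\improper}{\bm{I}_{\bar d}}}=O(L)$, $\sum_t\abs{\inner{\bar g_t^\improper}{\bm{I}_{\bar d}}}=O(LT)$. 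Finally, writing $\bar\phi^*=\Phi^{*,(\bm{I})}\bm{I}_{\bar d}+\sum_{h\in\calH,e\in\calE}\Phi^{*,(h,e)}(h\otimes e)$ for a representation of our choosing, the feature transform of Section~\ref{subsection:wavelet} splits $\sum_t\inner{\bar g_t^\improper}{\bar\phi_t^\improper-\bar\phi^*}$ into a sum over $\calB$ of one-dimensional OLO regrets, and to each of these we apply Lemma~\ref{lemma:one_d_alg} with comparator $u=\Phi^{*,(b)}$ and the Lipschitz constant $G_b$ above.

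It remains to choose the representation and sum up. For the ``uniform'' branch, set $\Phi^{*,(\bm{I})}=0$ and use the orthogonal Haar coefficients: the $d^\unif_{\bar\phi^*}$ identical rows of $\bar\phi^*$ make every column constant on a common index set of that size, so a dyadic-interval count shows that at most $N=O((\bar d-d^\unif_{\bar\phi^*}+1)\log\bar d)$ Haar vectors $h$ carry a nonzero $\Phi^{*,(h,e)}$. For the ``self-map'' branch, set $\Phi^{*,(\bm{I})}=1$ and expand $\bar\phi^*-\bm{I}_{\bar d}$, which has at most $\bar d-d^\self_{\bar\phi^*}$ nonzero rows; the same count bounds the active Haar vectors by $N=O((\bar d-d^\self_{\bar\phi^*}+1)\log\bar d)$. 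In the worst case neither helps and $N\leq\bar d$. Two structural facts drive the summation: (i) right-stochasticity of $\bar\phi^*$ (respectively, each row of $\bar\phi^*-\bm{I}_{\bar d}$ having $\ell_1$-norm at most $2$) forces $\sum_j\abs{\Phi^{*,(h,e^{(j)})}}\leq 1$ (respectively $\leq 2$) for each fixed $h$; and (ii) since the Haar supports at any fixed scale partition $[1:\bar d]$ and there are $S=O(\log\bar d)$ scales, $\sum_{h\in\calH}\abs{\inner{\bar q_t}{h}}\leq 1+S=O(\log\bar d)$ for any $\bar q_t\in\Delta(\bar d)$. Now discard the coordinates with $\Phi^{*,(b)}=0$, whose aggregate contribution is at most $\eps$ times a fixed polynomial in $d,L,T$ (from $\abs{\calB}=\Theta(d^2)$ and the $\sqrt{G_b^2+G_b\sum_t\abs{\inner{\bar g_t^\improper}{b}}}$ factors) and is made negligible by taking $\eps$ inverse-polynomially small; the rest is $O\rpar{\sum_{b:\Phi^{*,(b)}\neq 0}\abs{\Phi^{*,(b)}}\sqrt{G_b^2+G_b\sum_t\abs{\inner{\bar g_t^\improper}{b}}}\cdot(\log d)^{1/2}}$, where the additive $G_b^2$ terms contribute $O(LN)$ after using fact (i) — this is the origin of the ``$+d$'' under the square root. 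For the main term, sum over the column index $j$ at a fixed $h$ first: by Cauchy--Schwarz and fact (i), $\sum_j\abs{\Phi^{*,(h,e^{(j)})}}\sqrt{O(L^2)\sum_t\abs{\inner{\bar q_t}{h}}}=O\rpar{L\sqrt{\sum_t\abs{\inner{\bar q_t}{h}}}}$; then Cauchy--Schwarz over the $N$ active $h$'s together with fact (ii) gives $\sum_h\sqrt{\sum_t\abs{\inner{\bar q_t}{h}}}\leq\sqrt{N\cdot O(T\log\bar d)}$. Plugging $N\leq\bar d$ recovers the optimal $\tilde O(\sqrt{dT})$ swap regret, $N=O((\bar d-d^\unif_{\bar\phi^*}+1)\log\bar d)$ gives $\tilde O(\sqrt{(d-d^\unif_{\phi^*}+1)T})$, and $N=O((\bar d-d^\self_{\bar\phi^*}+1)\log\bar d)$ gives $\tilde O(\sqrt{(d-d^\self_{\phi^*})T})$; taking whichever of the two representations is better yields the $\min$ in the statement. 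The identity coordinate contributes at most $\abs{\Phi^{*,(\bm{I})}}\sqrt{O(L^2)+O(L^2T)}\cdot(\log d)^{1/2}=O(L\sqrt{T}(\log d)^{1/2})$, absorbed into the bound (and absent in the external-regret branch). The two $\log\bar d$ factors under the root (from facts (i)--(ii) and from the sparsity count), together with the $\sqrt{\log(1+\abs{u}/\eps)}=O(\sqrt{\log d})$ factor Lemma~\ref{lemma:one_d_alg} contributes outside the root, combine to the stated $(\log d)^{3/2}$; the intermediate variation-based bound Theorem~\ref{theorem:external} packages the Haar-specific part of this computation, and Theorem~\ref{theorem:main} follows from it together with the dyadic-interval count and the relabeling lemmas.

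The main obstacle is precisely the last Cauchy--Schwarz over the features: with a generic family of $\Theta(d^2)$ matrix features it would yield $\tilde O(\sqrt{d^2T})$ (the extra $\sqrt d$ being the penalty flagged in the ``Technical challenges'' paragraph), so the whole bound rests on facts (i) and (ii) — row-stochasticity controlling the $\ell_1$ mass of the Haar coefficients across columns, and the partition-plus-few-scales structure of the Haar basis collapsing $\sum_h\abs{\inner{\bar q_t}{h}}$ to $O(\log d)$ — being exactly what cancels the spurious $\sqrt d$ while still retaining the first-order quantity $G_b\sum_t\abs{\inner{\bar g_t^\improper}{b}}$ needed for swap-regret optimality. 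A secondary point I would verify carefully is that the two-stage gradient-processing oracle of Section~\ref{subsection:constraint} keeps $\bar g_t^\improper$ close enough to the rank-one $\bar p_t\otimes\bar l_t$ for the fact-(ii)-type bound to remain valid; that is the technical content I am deferring to Lemma~\ref{lemma:gradient_norm}.
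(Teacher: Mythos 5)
Your proposal is correct and follows essentially the same chain of reductions as the paper — preprocessing (Lemmas~\ref{lemma:relabeling_uniform} and \ref{lemma:relabeling_self}), projection/gradient processing (Lemma~\ref{lemma:regret_wrapper}), the per-feature one-dimensional regret (Lemma~\ref{lemma:one_d_alg}), and then a sparsity-weighted summation using Lemma~\ref{lemma:gradient_norm}, Lemma~\ref{lemma:coefficient_magnitude} and Lemma~\ref{lemma:haar}. The two structural facts you highlight — row-stochasticity controlling $\sum_j\abs{\Phi^{*,(h,j)}}$, and the partition-plus-few-scales structure controlling $\sum_h\sum_{i\in I^{(h)}}\bar p_{t,i}$ — are exactly the levers the paper uses.

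Two differences are worth noting. First, you apply Cauchy--Schwarz \emph{once} over all $N$ active features $h$, pairing $\sqrt{N}$ with $\sqrt{\sum_h\sum_t\sum_{i\in I^{(h)}}\bar p_{t,i}}=O(\sqrt{T\log d})$; the paper instead applies Cauchy--Schwarz \emph{per scale} $s$ (pairing the per-scale active-location count with $\sum_l\sum_{i\in I^{(s,l)}}\bar p_{t,i}=1$) and then sums over $S=O(\log d)$ scales. The two routes redistribute the same $\log$ factors and land on the same $(\log d)^{3/2}$ rate. Second, your sparsity count is expressed through $\bar d-d^\unif_{\bar\phi^*}$ (resp. $\bar d-d^\self_{\bar\phi^*}$), whereas the paper's intermediate quantity is the finer $\mathrm{RowSwitch}(\bar\phi^*)=\sum_i\bm{1}[\bar\phi^*_i\neq\bar\phi^*_{i+1}]$; both upper-bound the per-scale number of active dyadic intervals and both are $O(d-d^\unif_{\phi^*})$ under the relabeling of Lemma~\ref{lemma:relabeling_uniform}, so they give the same Theorem~\ref{theorem:main}, but the paper's choice is what allows the strictly sharper, order-dependent Theorem~\ref{theorem:external} highlighted in Remark~\ref{remark:relabeling}. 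One notational slip: Lemma~\ref{lemma:gradient_norm} bounds $\abs{\inner{\bar g_t^\improper}{h\otimes e^{(j)}}}$ by $2L\sum_{i\in I^{(h)}}\bar p_{t,i}$ (the mass of $\bar p_t$ on the support of $h$), not by $O(L)\abs{\inner{\bar p_t}{h}}$; you write the latter, but since the partition-sum bound of your ``fact (ii)'' holds for both quantities, this does not affect the argument's validity.
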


Assuming $T\gg d$, Theorem~\ref{theorem:main} recovers the $\tilde O(\sqrt{T})$ external regret in the case of $d^\unif_{\phi^*}=d$, the $\tilde O(\sqrt{T})$ internal regret in the case of $d^\self_{\phi^*}=d-1$ and the $\tilde O(\sqrt{dT})$ swap regret in the worst case, matching the results achieved by specialized algorithms (modulo polylog factors). Although this basic requirement can also be achieved by simply aggregating specialized algorithms using MWU,\footnote{One could simultaneously maintain three algorithms targeting the external, internal and swap regret respectively, treat them as ``meta''-experts, and aggregate them using a three-expert MWU algorithm on top. The weakness is that this baseline does not achieve a better-than-$\tilde O(\sqrt{dT})$ $\phi$-regret bound with respect to a generic $\phi$.} concrete improvements of our approach are 
manifested in the intermediate regimes. For example, suppose $d^\unif_{\phi^*}=d-k+1$, then Algorithm~\ref{algorithm:main} guarantees $\reg_T(\phi^*)=\tilde O(\sqrt{kT})$ while being computationally efficient and agnostic to $k$. This improves both
\begin{itemize}
\item generic swap regret minimization algorithms achieving $\tilde O(\sqrt{dT})$ \citep{blum2007external}; and
\item the computationally inefficient, $k$-dependent baseline which runs MWU over an appropriate $\phi$-class (specifically, all zero-one stochastic matrices $\phi$ satisfying $d^\unif_{\phi}=d-k+1$). MWU can guarantee the $\tilde O(\sqrt{kT})$ $\phi$-regret despite the cardinality of the hypothesis class being exponential in $k$, but computationally this is impractical when $k$ is large. 
\end{itemize}
Similar cases of improvement can be constructed for large $d^\self_{\phi^*}$ as well, although it requires a more subtle comparison to related works \citep{roth2023learning,noarov2023high}. This is deferred to Appendix~\ref{section:discussion}. 

\paragraph{Proof sketch of Theorem~\ref{theorem:main}} For simplicity, suppose Eq.(\ref{eq:regret_decomposition}) can indeed be achieved (the rigorous analysis using Lemma~\ref{lemma:one_d_alg} is similar). Combining the components from Section~\ref{section:ingredients}, the LHS of Eq.(\ref{eq:regret_decomposition}) upper-bounds $\reg_T(\phi^*)$. The RHS of Eq.(\ref{eq:regret_decomposition}) is a summation over $b\in\calB$, and excluding the edge cases we may only consider those $b$ expressed as $h^{(s,l)}\otimes e^{(j)}$, where $s\in[1:S]$ is the scale parameter, $l\in[1:2^{S-s}]$ is the location parameter, and $j\in[1:\bar d]$ is the column index. The remaining task becomes upper-bounding
\begin{equation}\label{eq:sketch}
\sum_{s=1}^S\sum_{l=1}^{2^{S-s}}\sum_{j=1}^{\bar d}\abs{\Phi^{*,(s,l,j)}}\sqrt{\sum_{t=1}^T\inner{\bar g_t^\improper}{h^{(s,l)}\otimes e^{(j)}}^2}.
\end{equation}
More precisely, the ``transform domain coefficient'' $\Phi^{*,(s,l,j)}\in\R$ here is defined such that $\Phi^{*,(s,l,j)}\cdot h^{(s,l)}\otimes e^{(j)}$ is the projection of either $\bar\phi^*$ or $\bar\phi^*-\bm{I}_{\bar d}$ to $h^{(s,l)}\otimes e^{(j)}$. Recall that such non-uniqueness is due to $\calB$ being overcomplete. 

To proceed, there are two crucial steps. 
\begin{itemize}
\item Lemma~\ref{lemma:gradient_norm} shows that $\abs{\inner{\bar g_t^\improper}{h^{(s,l)}\otimes e^{(j)}}}\leq 2\sum_{i\in I^{(s,l)}}\bar p_{t,i}$, where $I^{(s,l)}\subset[1:\bar d]$ is the support of $h^{(s,l)}$. Since on each row the sum of $\bar\phi^*$ entries is $1$, we also have $\sum_{j=1}^{\bar d}\abs{\Phi^{*,(s,l,j)}}\leq \mathrm{const}$ (Lemma~\ref{lemma:coefficient_magnitude}). It means even very crudely,
\begin{multline*}
\mathrm{Eq.(\ref{eq:sketch})}\lesssim \sum_{s=1}^S\sum_{l=1}^{2^{S-s}}\sum_{j=1}^{\bar d}\abs{\Phi^{*,(s,l,j)}}\sqrt{\sum_{t=1}^T\sum_{i\in I^{(s,l)}}\bar p_{t,i}}\lesssim \sum_{s=1}^S\sum_{l=1}^{2^{S-s}}\sqrt{\sum_{t=1}^T\sum_{i\in I^{(s,l)}}\bar p_{t,i}}\\
\overset{\mathrm{Cauchy-Schwarz}}{\lesssim} \sum_{s=1}^S\sqrt{2^{S-s}}\sqrt{\sum_{t=1}^T\sum_{l=1}^{2^{S-s}}\sum_{i\in I^{(s,l)}}\bar p_{t,i}}\overset{\bar p_t\in\Delta(\bar d)}{\lesssim} \sum_{s=1}^S\sqrt{2^{S-s}T}\overset{S\lesssim\log d}{\lesssim} \sqrt{dT}\log d,
\end{multline*}
which recovers the nonadaptive, $\tilde O(\sqrt{dT})$ swap regret bound. 
\item The adaptive bound requires a sparsity-based refinement. Let us reconsider $\Phi^{*,(s,l,j)}$ projected from $\bar\phi^*$: if for all $i\in I^{(s,l)}$ the $i$-th row of $\bar\phi^*$ equals the same vector, then due to the property of the Haar wavelet, $\sum_{j=1}^{\bar d}\abs{\Phi^{*,(s,l,j)}}=0$. Now the question is, for any fixed $s$, how many different $l$ make this sum nonzero? By definition, the answer is at most $\mathrm{RowSwitch}(\bar\phi^*)\defeq\sum_{i=1}^{\bar d-1}\bm{1}[\bar \phi^*_{i}\neq \bar \phi^*_{i+1}]$, which means the same Cauchy-Schwarz argument gives us the $\tilde O\rpar{\sqrt{\mathrm{RowSwitch}(\bar\phi^*)\cdot T}}$ bound rather than $\tilde O(\sqrt{dT})$. The last step is to construct the augmented comparator $\bar\phi^*$ in a complexity-preserving manner, i.e., $\mathrm{RowSwitch}(\bar\phi^*)\lesssim d-d^\unif_{\phi^*}$ (Lemma~\ref{lemma:relabeling_uniform}). 

The proof of the $d^\self_{\phi^*}$-dependent bound is similar. The key observation is that $\mathrm{RowSwitch}(\bar\phi^*-\bm{I}_{\bar d})$ is small if $d^\self_{\bar \phi^*}$ is large. Since everything in this proof concerns the \emph{same algorithm} (just with different constructions of $\bar\phi^*$ and $\Phi^{*,(s,l,j)}$), the $d^\unif_{\phi^*}$ and $d^\self_{\phi^*}$-dependent bounds are achieved simultaneously. 
\end{itemize}

\begin{remark}[Role of relabeling]\label{remark:relabeling}
Compared to typical LEA algorithms, Algorithm~\ref{algorithm:main} has a notable difference: the algorithm generates different outputs under a permutation of the expert indices. Theorem~\ref{theorem:main} proves that regardless of the permutation its $\phi$-regret bound improves \citep{blum2007external}, but as shown in the proof sketch, this is actually a \emph{relaxation} of an order-dependent result (Theorem~\ref{theorem:external}). In this regard, the relabeling function $\calI$ could be seen as the user's prior: if it renders $\mathrm{RowSwitch}(\bar\phi^*)$ small, then the order-dependent bound would further improve Theorem~\ref{theorem:main}. 

As an example, consider $d=\bar d$, with the first $d/2$ rows of $\phi^*$ being some vector $v$ while the rest of the rows being $w\neq v$. Such a $\phi^*$ itself possesses certain simplicity, and we suppose the user happens to preserve this simplicity by ``trivially'' picking $\calI(i)=i;\forall i\in[1:d]$, which ensures $\bar\phi^*=\phi^*$ by the construction of Lemma~\ref{lemma:relabeling_uniform}. In this case, although Theorem~\ref{theorem:main} guarantees the same $\reg_T(\phi^*)=\tilde O(\sqrt{dT})$ as \citep{blum2007external}, Theorem~\ref{theorem:external} sharpens it to $\tilde O\rpar{\sqrt{\mathrm{RowSwitch}(\bar\phi^*)\cdot T}}=\tilde O(\sqrt{T})$. We expect that reaping such benefits in ``ordered'' downstream problems, such as the full swap regret problem \citep{fishelson2025full} where experts are embedded into $\R^n$ thus equipped with natural orderings, could be an intriguing direction for future works. 
\end{remark}

Finally, we can characterize the quantile regret of Algorithm~\ref{algorithm:main}, which is formally defined as follows. Imagine that at the end of the game the cumulative loss of all experts ($\sum_{t=1}^Tl_{t,i};\forall i\in[1:d]$) are sorted from the lowest to the highest, with ties broken arbitrarily. Then, for any $\eps\in[d^{-1},1]$, let $i_\eps$ be the index of the $\ceil{\eps d}$-th element in this sorted list. The $\eps$-quantile regret is defined as
\begin{equation*}
\reg_T(\eps)\defeq \sum_{t=1}^T\inner{p_t}{l_t}-\sum_{t=1}^Tl_{t,i_\eps}.
\end{equation*}

\begin{restatable}{theorem}{quantile}\label{theorem:quantile}
There exists an absolute constant $c>0$ such that for all $\eps\in[d^{-1}:1]$, Algorithm~\ref{algorithm:main} guarantees the quantile regret bound
\begin{equation*}
\reg_T(\eps)\leq c\sqrt{T\log \eps^{-1}}.
\end{equation*}
\end{restatable}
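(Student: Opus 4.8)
The plan is to realize the quantile regret as a $\phi$-regret against a carefully chosen comparator $\phi^*\in\calS(d)$ and then feed this comparator into the analysis underlying Theorem~\ref{theorem:main}, but with a sharper accounting of the transform-domain coefficients than the crude $\min\{\ldots\}$ bound. First I would fix $\eps\in[d^{-1},1]$, let $i_\eps$ be the index of the $\lceil\eps d\rceil$-th best expert (as in the theorem statement), and take $\phi^*$ to be the \emph{constant} map sending every $p\in\Delta(d)$ to $e^{(i_\eps)}$; equivalently, as a right stochastic matrix, every row of $\phi^*$ equals $e^{(i_\eps)}$. Then $\sum_{t=1}^T\inner{\phi^*(p_t)}{l_t}=\sum_{t=1}^T l_{t,i_\eps}$, so $\reg_T(\eps)=\reg_T(\phi^*)$ exactly. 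Note this $\phi^*$ has $d^\unif_{\phi^*}=d$, so Theorem~\ref{theorem:main} alone already gives $\tilde O(\sqrt{T})=\tilde O(\sqrt{T\log d})$ — the right shape but not the $\log\eps^{-1}$ refinement we want when $\eps$ is bounded away from $1/d$.

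The real work is to reopen the proof-sketch computation, Eq.(\ref{eq:sketch}), for this specific $\phi^*$ and show that the relevant transform-domain coefficients are not merely sparse in location but also \emph{small in total mass} in a way controlled by $\eps$ rather than by $d$. Here I would use the relabeling $\calI$ and the augmented comparator $\bar\phi^*$ from Lemma~\ref{lemma:relabeling_uniform}: since $\phi^*$ is constant with all rows $e^{(i_\eps)}$, the augmented $\bar\phi^*\in\calS(\bar d)$ is again a constant matrix, every row equal to the augmented indicator of the block $\calI(i_\eps)$. A constant matrix has $\mathrm{RowSwitch}(\bar\phi^*)=0$, so in Eq.(\ref{eq:sketch}) the only surviving term among the $h^{(s,l)}\otimes e^{(j)}$ features is the one at scale $s=S$ (the coarsest, $h^{(S,0)}$ being the all-ones vector in $\calB$ via $\bm I_{\bar d}$ — more precisely, the projection onto $h^{(s,l)}\otimes e^{(j)}$ vanishes for all $s<S$ because the rows are identical over each dyadic support). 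That collapses the triple sum to a single scale, and what remains is essentially a one-dimensional comparator-adaptive regret for learning a point mass on one augmented coordinate, i.e. for the coefficient along $e^{(i_\eps)}$-type features at the top scale. Applying Lemma~\ref{lemma:one_d_alg} with comparator magnitude $|u|=O(1)$ on each of these $O(\log d)$ surviving coefficients, together with the gradient magnitude bound $|\inner{\bar g_t^\improper}{h^{(s,l)}\otimes e^{(j)}}|\le 2L\sum_{i\in I^{(s,l)}}\bar p_{t,i}$ from Lemma~\ref{lemma:gradient_norm}, and then using $\sum_{t}\sum_i \bar p_{t,i}\le T$ after a Cauchy–Schwarz over scales, yields $O(L\sqrt{T}\cdot\mathrm{polylog})$. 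To sharpen the $\mathrm{polylog}$ to $\log\eps^{-1}$, the key refinement is that only the scales $s$ with $2^s\lesssim 1/\eps$, i.e. roughly $\log_2(1/\eps)$ scales, can carry nonzero coefficients for a comparator concentrated on a single block of size $O(1)$: at coarser scales the Haar support $I^{(s,l)}$ containing $i_\eps$ has the all-ones-restricted-to-a-small-block projection of size $O(2^{-s})$, and summing $|u|$ over those contributes a geometric series dominated by its first $\log\eps^{-1}$ terms; one then invokes the $\sqrt{\log(1+|u|/(\sqrt2\eps))}$-type factor in Lemma~\ref{lemma:one_d_alg} being $O(\sqrt{\log\eps^{-1}})$ rather than $O(\sqrt{\log d})$. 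I would set the hyperparameter $\eps$ of Lemma~\ref{lemma:one_d_alg} to a fixed value like $1/T$ so the $2\eps$ term is negligible.

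The main obstacle I anticipate is the last refinement: squeezing $\log d$ down to $\log\eps^{-1}$. The crude version of the argument naturally produces $\log d$ from summing over all $S=\lceil\log_2 d\rceil$ Haar scales, and one must argue that for a \emph{constant} comparator the mass the algorithm needs to place along coarse-scale features decays geometrically, so that only the first $\Theta(\log\eps^{-1})$ scales matter up to a constant — and crucially that this decay is compatible with the comparator-adaptive (FTRL-based, no oracle tuning) algorithm of Lemma~\ref{lemma:one_d_alg}, whose per-coordinate bound depends on $|u|$ through the $\sqrt{\log(1+|u|/(\sqrt2\eps))}$ factor. A secondary subtlety is that $\phi^*$ is constant, so the competing representation through $\bm I_{\bar d}$ (relevant for large $d^\self$) is the \emph{wrong} one to use here; I must make sure the analysis selects the constant-function representation (the $h^{(S,0)}\otimes e$ branch) and not the identity-plus-sparse branch, so that the $d^\self$ side of the min in Theorem~\ref{theorem:main}, which would only give $\sqrt{d}$, is simply not invoked. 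Once these are handled, collecting the $O(\log\eps^{-1})$ scales each contributing $O(L\sqrt{T\log\eps^{-1}}/\sqrt{\text{scale count}})$ via Cauchy–Schwarz gives the claimed $O(L\sqrt{T\log\eps^{-1}})$.
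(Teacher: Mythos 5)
Your high-level setup (reduce the quantile regret to a $\phi$-regret against a constant $\phi^*$, note that $\mathrm{RowSwitch}(\bar\phi^*)=0$ so only the all-ones scale survives, then reopen Eq.(\ref{eq:sketch})) matches the paper. But there is a genuine gap in the key step where you try to get $\log\eps^{-1}$ instead of $\log d$, and your proposed fix does not work.

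The problem is your choice of comparator. You take $\phi^*=\bm 1\otimes e^{(i_\eps)}$, i.e., a point mass on the single quantile expert. After the preprocessing, the only surviving transform-domain coefficients are $\Phi^{*,(h^{(S,0)},j)}$ for $j\in\calI(i_\eps)$, and each has magnitude $\Theta(1)$. When you plug this into Lemma~\ref{lemma:one_d_alg} with the algorithm's $T$-independent hyperparameter $\eps_b=\bar d(d^2|\calI(\calI^{-1}(j))|)^{-1}=\Theta(d^{-1})$, the $\sqrt{\log(1+|u|/(\sqrt2\eps_b))}$ factor is $\Theta(\sqrt{\log d})$, not $\Theta(\sqrt{\log\eps^{-1}})$. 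Your attempted refinement via geometric decay over Haar scales does not rescue this: the Haar vectors $h^{(s,l)}$ in $\calB$ act on the \emph{row} index, while the column index is handled separately by $e^{(j)}$. For a constant comparator all row-variation coefficients vanish identically (that is exactly why $\Diamond=0$), so there is no geometric cascade across scales carrying mass concentrated near $i_\eps$ — there is just one nonvanishing column, carrying an $O(1)$ coefficient at the top scale. There is no scale parameter left to decay.

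The paper's actual resolution is different and essential: instead of comparing to a single expert, it compares to the uniform distribution $u$ over the $\ceil{\eps d}$ best experts (the standard trick from \citep[Remark~9.16]{orabona2023modern}). Since $\reg_T(\eps)\le\reg_T(\bm 1\otimes u)$ when $u$ averages the top experts, this is still a valid upper bound. With this $u$ the surviving coefficients are $\Phi^{*,(h^{(S,0)},j)}\approx 1/\ceil{\eps d}$ rather than $\Theta(1)$, and there are $\Theta(\eps d)$ of them; plugging into Lemma~\ref{lemma:one_d_alg} gives $\sum_j|\Phi^{*,(h^{(S,0)},j)}|\sqrt{\log(1+|\Phi^{*,(h^{(S,0)},j)}|/\eps_b)}\approx\sqrt{\log(1+d/\ceil{\eps d})}=O(\sqrt{\log\eps^{-1}})$. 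This is where $\log\eps^{-1}$ comes from. Also, contrary to your suggestion, setting the FTRL hyperparameter to $1/T$ would be wrong: the log factor in Lemma~\ref{lemma:one_d_alg} is $\sqrt{\log(1+|u|/\eps_b)}$, so a $T$-dependent $\eps_b$ would leak $\log T$ into the bound; the algorithm's $d$-scaled choice of $\eps_b$ is what keeps the factor at $\sqrt{\log\eps^{-1}}$ after averaging.
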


This is optimal \citep{negrea2021minimax}, and importantly, it sharpens the $\tilde O(\sqrt{T})$ external regret bound from Theorem~\ref{theorem:main} to the optimal rate $O(\sqrt{T\log d})$, including the log factor. The proof hinges on the observation that the quantile regret corresponds to a special case of the $\phi$-regret where the rows of the comparing $\phi$ matrix are all equal. It means we may only analyze a subset of $\calB$ generated by the all-one feature $h^{(S,0)}\in\calH$, and the rest of the proof is fairly standard \citep[Chapter~10.6]{orabona2025modern}. 

\section{Conclusion}

Focusing on the LEA problem with $T\gg d$, this paper leverages the ideas of features and sparsity to refine the classical reduction approaches in $\phi$-regret and swap regret minimization \citep{blum2007external,gordon2008no}. We present an LEA algorithm with an adaptive $\phi$-regret upper bound, matching the external, internal and swap regret of specialized algorithms while improving them in the intermediate regimes. Conceptually, our key observation is that incorporating certain Haar-wavelet-inspired matrix features can introduce inductive biases that are well-aligned with important special cases of the $\phi$-regret. Technically, we show that the Haar wavelet is naturally congruent with the benign gradient structure from the $\phi$-to-external reduction, such that the optimal $\tilde O(\sqrt{dT})$ swap regret bound can still be achieved after incorporating such matrix features. We leave a thorough study of the downstream benefits to future works. In addition, the readers are referred to \citep{hait2025comparator} for technical improvements on this topic, as well as a concrete application to learning in games. 

\section*{Acknowledgment}

We thank the reviewers for their valuable feedback. ZL and JS thank Elad Hazan for the financial support. ZZ thanks Heng Yang for the financial support through Harvard University Dean’s Competitive Fund for Promising Scholarship.

\bibliography{phi_wavelet}

\newpage
\section*{Appendix}
\appendix

Appendix~\ref{section:code} presents the pseudocode of Algorithm~\ref{algorithm:main} alongside the algorithmic details omitted in the main paper. To analyze this algorithm, Appendix~\ref{section:detail_ingredients} presents several lemmas on its technical ingredients introduced in Section~\ref{section:ingredients}. These are then used in Appendix~\ref{section:proof_main} to prove our main results. Appendix~\ref{section:discussion} contains detailed comparison to existing results. 

\section{Details of the Algorithm}\label{section:code}

\subsection{Constraint Oracle}\label{subsection:detail_constraint}

Our approach to enforce the constraint $\calS(d)$ (Section~\ref{subsection:constraint}) is described below. 

\paragraph{Stage 1: $\R^{\bar d\times\bar d}\rightarrow \R_+^{\bar d\times\bar d}$.} Starting from $\bar \phi^\improper_t$, we define the intermediate prediction $\bar \phi^+_t$ whose $(i,j)$-th entry $\bar \phi^+_{t,i,j}$ equals $\max\left\{\bar \phi^\improper_{t,i,j}, 0\right\}$, the maximum of the $(i,j)$-th entry of $\bar \phi^\improper_t$ and $0$. This is performed for all $i,j\in[1:\bar d]$, resulting in a positive matrix $\bar \phi^+_t\in\R^{\bar d\times\bar d}_+$. 

Regarding gradient processing, given the intermediate gradient $\bar g_t^+\in\R^{\bar d\times\bar d}$, we define the $(i,j)$-th entry of $\bar g^\improper_t$ as
\begin{equation*}
\bar g^\improper_{t,i,j}\defeq\begin{cases}
\bar g^+_{t,i,j},& \bar \phi^+_{t,i,j}=\bar \phi^\improper_{t,i,j},\\
\min\left\{\bar g^+_{t,i,j},0\right\},& \mathrm{else}.
\end{cases}
\end{equation*}
The intuition is the following. On each entry,
\begin{itemize}
\item If the ``unprojected prediction'' $\bar \phi^\improper_{t,i,j}$ is already positive, then no projection is needed since $\bar \phi^+_{t,i,j}=\max\left\{\bar \phi^\improper_{t,i,j}, 0\right\}=\bar \phi^\improper_{t,i,j}$. Naturally the gradient is kept unchanged, $\bar g^\improper_{t,i,j}=\bar g^+_{t,i,j}$. 
\item If $\bar \phi^\improper_{t,i,j}$ is negative, then it is projected to $\bar \phi^+_{t,i,j}=0$, which means ``positive predictions are favored''. To encourage that we only send negative $\bar g^\improper_{t,i,j}$ to the wrapped algorithm that generates $\bar \phi^\improper_{t,i,j}$. 
\end{itemize}

\paragraph{Stage 2: $\R_+^{\bar d\times\bar d}\rightarrow\calS(\bar d)$.} Given $\bar \phi^+_t\in\R^{\bar d\times\bar d}_+$ from Stage 1, we scale its $i$-th row $\bar\phi^+_{t,i}$ by $1/\norm{\bar\phi^+_{t,i}}_1$, for all $i$. If $\norm{\bar\phi^+_{t,i}}_1=0$ then the scaled vector is defined as $[1/d,\ldots,1/d]$. The matrix after the scaling is defined as $\bar\phi_t\in\calS(\bar d)$. 

Regarding gradient processing, given $\bar g_t\in\R^{\bar d\times\bar d}$, we define the intermediate gradient $\bar g_t^+\in\R^{\bar d\times\bar d}$ whose $i$-th row is
\begin{equation*}
\bar g^+_{t,i}\defeq\bar g_{t,i}-\inner{\bar g_{t,i}}{\bar\phi_{t,i}}.
\end{equation*}
Here $\bar\phi_{t,i}\in\Delta(\bar d)$ denotes the $i$-th row of the proper matrix prediction $\bar\phi_{t}$. The intuition is that $\bar g^+_{t,i}$ represents the ``centered'' version of $\bar g_{t,i}$ with respect to $\bar\phi_{t}$. The obtained intermediate gradient $\bar g_t^+$ is then passed to Stage 1. 

\subsection{Pseudocode}

The pseudocode of our algorithm is presented as Algorithm~\ref{algorithm:main}. It only requires the relabeling function $\calI$ as the user's input. 

\begin{algorithm*}[!ht]
\caption{Wavelet-based LEA with adaptive $\phi$-regret guarantee.\label{algorithm:main}}
\begin{algorithmic}[1]
\REQUIRE A user-specified relabeling function $\calI$ (Section~\ref{subsection:preprocessing}). Besides, recall the several components introduced previously: 
\begin{itemize}[leftmargin=*,itemsep=0pt,topsep=5pt]
\item A fixed point oracle (the beginning of Section~\ref{section:ingredients}). 
\item The collection $\calB$ of matrix features (Section~\ref{subsection:wavelet}).
\item The one-dimensional OLO algorithm from Lemma~\ref{lemma:one_d_alg}, denoted by $\A_\oned$. It requires a Lipschitz constant $G$ and a positive constant $\eps$ as hyperparameters. 
\item The projection oracle and the gradient processing oracle (Section~\ref{subsection:constraint} and Appendix~\ref{subsection:detail_constraint}).
\end{itemize}
\STATE For all $b\in\calB$, initiate a copy of $\calA_{\oned}$, denoted by $\calA_{\oned}^{(b)}$. We set $G=2$ for all $\calA_{\oned}^{(b)}$. The choice of $\eps$ depends on $b$, which we denote as $\eps_b$: 
\begin{itemize}
\item If $b=\bm{I}_{\bar d}$, then $\eps_b=1$.
\item For other $b\in\calB$, we can write $b=h\otimes e^{(j)}$ for some $h\in\calH$ and $e^{(j)}\in\calE$. If $h=h^{(S,0)}$ then $\eps_b=\bar d(d^2\abs{\calI(\calI^{-1}(j))})^{-1}$; otherwise $\eps_b=(d^2\abs{\calI(\calI^{-1}(j))})^{-1}$. 
\end{itemize}
\FOR{$t=1,2,\ldots,$}
\STATE For all $b\in\calB$, query the $t$-th prediction $\Phi^{(b)}_t\in\R$ of $\calA_{1d}^{(b)}$. 
\STATE Following Eq.(\ref{eq:improper}), let
\begin{equation*}
\bar\phi_t^{\mathrm{improper}}=\sum_{b\in\calB}\Phi_t^{(b)}b\in\R^{\bar d\times\bar d}.
\end{equation*}
\STATE Using the projection oracle, compute $\bar \phi^\improper_t\rightarrow \bar \phi_t\in\calS(\bar d)$. 
\STATE Using the fixed point oracle, compute $\bar p_t\in\Delta(\bar d)$ such that $\bar p_t=\bar\phi_t(\bar p_t)$.
\STATE Using the relabeling function, compute $p_t\in\Delta(d)$ whose $i$-th entry equals the total mass of $\bar p_t$ on $\calI(i)$, i.e.,
\begin{equation*}
p_{t,i}=\sum_{\bar i\in\calI(i)}\bar p_{t,\bar i}, \quad \forall i\in[1:d].
\end{equation*}
\STATE Output $p_t$ as the $t$-th decision in LEA, and observe the loss vector $l_t\in\R^d$. 
\STATE Using the relabeling function, define $\bar l_t\in\R^{\bar d}$ whose $\bar i$-th entry equals the $\calI^{-1}(\bar i)$-th entry of $l_t$, i.e., 
\begin{equation*}
\bar l_{t,\bar i}=l_{t,\calI^{-1}(\bar i)},\quad\forall \bar i\in[1:\bar d].
\end{equation*}
\STATE Let $\bar g_t=\bar p_t\otimes \bar l_t\in\R^{\bar d\times\bar d}$. 
\STATE Using the gradient processing oracle (which depends on $\bar \phi^\improper_t$), compute $\bar g_t\rightarrow\bar g_t^\improper\in\R^{\bar d\times\bar d}$.
\STATE For all $b\in\calB$, return $\inner{\bar g_t^\improper}{b}$ as the $t$-th loss gradient to $\calA_{\oned}^{(b)}$.
\ENDFOR
\end{algorithmic}
\end{algorithm*}

\section{Details of the Technical Ingredients}\label{section:detail_ingredients}

The subsections below provide the detailed analysis of the algorithmic components from Section~\ref{section:ingredients}, following the order there.

\subsection{Preprocessing}

Based on Section~\ref{subsection:preprocessing}, since this subsection only handles the comparing action modification rules, we will remove the superscript star from the notations $\phi^*$ and $\bar\phi^*$ for conciseness. 

By construction, Algorithm~\ref{algorithm:main} satisfies the loss condition $\inner{p_t}{l_t}=\inner{\bar p_t}{\bar l_t}$. We now show that for any comparing action modification rule $\phi\in\calS(d)$, there exists $\bar \phi\in\calS(\bar d)$ that preserves the complexity of $\phi$ and further satisfies $\inner{\phi(p_t)}{l_t}=\inner{\bar \phi(\bar p_t)}{\bar l_t}$. This means that the $\phi$-regret of Algorithm~\ref{algorithm:main} can be rewritten as $\sum_{t=1}^T\inner{p_t-\phi(p_t)}{l_t}=\sum_{t=1}^T\inner{\bar p_t-\bar \phi(\bar p_t)}{\bar l_t}$, and it suffices to upper-bound the RHS as a higher dimensional problem. 

First, we use $d-d^\unif_\phi$ to measure the complexity of $\phi$. We note that the following lemma critically uses the requirement that for all $i$, the set $\calI(i)$ only contains consecutive integers. The cardinality condition $1\leq\abs{\calI(i)}\leq 2$ is not used. 

\begin{lemma}\label{lemma:relabeling_uniform}
For any $\phi\in\calS(d)$, there exists $\bar\phi\in\calS(\bar d)$ such that the following two conditions hold. 
\begin{itemize}
\item For all $i\in[1:\bar d]$, let $\bar\phi_{i}$ represent the $i$-th row of $\bar \phi$, and let $\bm{1}[\cdot]$ be the indicator function. Then, 
\begin{equation*}
\sum_{i=1}^{\bar d-1}\bm{1}[\bar \phi_{i}\neq \bar \phi_{i+1}]\leq 2 (d-d^\unif_\phi).
\end{equation*}
\item In Algorithm~\ref{algorithm:main} we have $\inner{\phi(p_t)}{l_t}=\inner{\bar \phi(\bar p_t)}{\bar l_t}$.
\end{itemize}
\end{lemma}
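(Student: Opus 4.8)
The plan is to construct $\bar\phi$ explicitly from $\phi$ by "copying rows" according to the relabeling $\calI$, and then verify the two required properties directly. For the $i$-th original expert, $\calI(i)$ is a block of at most two consecutive indices in $[1:\bar d]$; I would define the $\bar i$-th row of $\bar\phi$ for $\bar i \in \calI(i)$ by redistributing the mass of the row $\phi_i$ according to the same relabeling on the column side. Concretely, since each column $j$ of $\phi$ should correspond to the block $\calI(j)$ of columns in $\bar\phi$, I set $\bar\phi_{\bar i, \bar j} \defeq \phi_{i,j} \cdot w_{\bar j}$ where $\bar i \in \calI(i)$, $\bar j \in \calI(j)$, and $w_{\bar j}$ is a fixed splitting weight for column $\bar j$ within its block (e.g.\ uniform, $w_{\bar j} = 1/\abs{\calI(j)}$, or a choice that depends on $\bar p_t$ — see below). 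Every row of $\bar\phi$ then sums to $1$ because $\sum_{\bar j \in \calI(j)} w_{\bar j} = 1$ and $\sum_j \phi_{i,j} = 1$, so $\bar\phi \in \calS(\bar d)$.

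The second bullet — the loss-equivalence $\inner{\phi(p_t)}{l_t} = \inner{\bar\phi(\bar p_t)}{\bar l_t}$ — should follow from two facts. First, $\bar l_t$ is constant on each block $\calI(j)$ by definition (its $\bar j$-th entry is $l_{t, \calI^{-1}(\bar j)}$), so $\inner{\bar\phi(\bar p_t)}{\bar l_t}$ only depends on the block-aggregated mass of $\bar\phi(\bar p_t)$, namely $\sum_{\bar j \in \calI(j)} (\bar\phi(\bar p_t))_{\bar j} = \sum_{\bar j \in \calI(j)} w_{\bar j} \sum_{\bar i} \bar p_{t,\bar i} \,(\text{row factor}) = (\phi(p_t))_j$ after using $\sum_{\bar j \in \calI(j)} w_{\bar j} = 1$ and $p_{t,i} = \sum_{\bar i \in \calI(i)} \bar p_{t,\bar i}$. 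Then $\inner{\bar\phi(\bar p_t)}{\bar l_t} = \sum_j (\phi(p_t))_j \, l_{t,j} = \inner{\phi(p_t)}{l_t}$. The only subtlety is whether the row-expansion is consistent: the $\bar i$-th component of $\bar\phi(\bar p_t)$ aggregates correctly across $\bar i \in \calI(i)$ precisely because all rows in a block are defined from the same $\phi_i$; I would write this out carefully but expect no real difficulty.

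The first bullet is where the block structure is essential. Since rows within a single block $\calI(i)$ are identical by construction, a "row switch" $\bar\phi_{\bar i} \neq \bar\phi_{\bar i+1}$ can occur only at a boundary between two consecutive blocks $\calI(i)$ and $\calI(i+1)$ (here I use that the blocks are intervals covering $[1:\bar d]$ in order, so consecutive $\bar i, \bar i+1$ lie in the same block or in adjacent blocks $\calI(i), \calI(i+1)$). There are $d-1$ such boundaries, and a boundary contributes to the count only if $\phi_i \neq \phi_{i+1}$ after the column-expansion — equivalently only if $\phi_i \neq \phi_{i+1}$ as original rows. So $\sum_{\bar i} \bm{1}[\bar\phi_{\bar i} \neq \bar\phi_{\bar i+1}] \le \#\{i : \phi_i \neq \phi_{i+1}\}$. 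The last step is the combinatorial claim that the number of adjacent-row changes in $\phi$ is at most $2(d - d^\unif_\phi)$: grouping the $d$ rows $\phi(e^{(1)}), \dots, \phi(e^{(d)})$ into maximal runs of equal consecutive rows, if there are $k$ such runs then there are $k-1$ changes; on the other hand the most frequent row value occurs $d^\unif_\phi$ times, so the $d - d^\unif_\phi$ "non-majority" rows must be distributed among runs in a way that forces $k \le 2(d - d^\unif_\phi) + 1$ (each non-majority row can create at most two run-boundaries, one on each side), giving $k - 1 \le 2(d-d^\unif_\phi)$.

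The main obstacle I anticipate is getting the combinatorial bound $k-1 \le 2(d - d^\unif_\phi)$ airtight — in particular handling the edge cases where majority-runs sit at the ends of the sequence, and confirming that the "each non-majority element destroys at most two boundaries relative to the all-majority configuration" argument is rigorous rather than hand-wavy. A clean way to make it precise: start from the sequence, delete the $d - d^\unif_\phi$ non-majority entries; the remaining $d^\unif_\phi$ entries are all equal, hence form a single run with $0$ internal boundaries; re-inserting one entry increases the number of boundaries by at most $2$; after all $d - d^\unif_\phi$ insertions the boundary count is at most $2(d - d^\unif_\phi)$, which is exactly $k-1$. I would also double-check that the factor-$2$ (as opposed to a tighter constant) is genuinely needed and matches how the lemma is invoked downstream, but for the statement as written this insertion argument suffices.
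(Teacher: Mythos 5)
Your proposal is correct and follows essentially the same approach as the paper: the same entrywise construction $\bar\phi_{\bar i,\bar j}=\phi_{\calI^{-1}(\bar i),\calI^{-1}(\bar j)}/\abs{\calI(\calI^{-1}(\bar j))}$, the same block-aggregation calculation for the loss identity, and the same combinatorial observation for the row-switch bound (the paper phrases it as $\bm{1}[\bar\phi_i\neq\bar\phi_{i+1}]\leq\bm{1}[\bar\phi_i\neq\bar\phi_{\calI(k^*)}]+\bm{1}[\bar\phi_{i+1}\neq\bar\phi_{\calI(k^*)}]$ where $k^*$ indexes a most-frequent row, which is precisely your ``each non-majority element is adjacent to at most two boundaries'' argument). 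One small imprecision to fix: $\calI$ is not required to preserve the order of the original indices, so adjacent blocks of $[1:\bar d]$ correspond to $\phi_{\sigma(m)},\phi_{\sigma(m+1)}$ for some permutation $\sigma$ rather than to $\phi_i,\phi_{i+1}$ directly --- but your final combinatorial bound holds for any ordering, so the conclusion is unaffected.
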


\begin{proof}[Proof of Lemma~\ref{lemma:relabeling_uniform}]
We construct $\bar\phi$ entrywise as the following. For all $i,j\in[1:\bar d]$,
\begin{equation*}
\bar\phi_{i,j}=\frac{\phi_{\calI^{-1}(i),\calI^{-1}(j)}}{\abs{\calI(\calI^{-1}(j))}},
\end{equation*}
where the denominator denotes the cardinality of the set $\calI(\calI^{-1}(j))$. It can be verified that
\begin{equation*}
\sum_{j=1}^{\bar d}\bar\phi_{i,j}=\sum_{j=1}^{\bar d}\frac{\phi_{\calI^{-1}(i),\calI^{-1}(j)}}{\abs{\calI(\calI^{-1}(j))}}=\sum_{\calI^{-1}(j)=1}^{d}\phi_{\calI^{-1}(i),\calI^{-1}(j)}=1,
\end{equation*}
therefore $\bar\phi\in\calS(\bar d)$. 

For later use, consider the size-$d$ multiset consisting of all the rows of $\phi$: we write $k^*\in[1:d]$ as the index of an arbitrary element that has the highest frequency in this multiset (ties are broken arbitrarily). Since for any generic $k\in[1:d]$ and any $i,j\in\calI(k)$, the $i$-th row and the $j$-th row of $\bar\phi$ are exactly the same, we denote this shared row as the vector $\bar\phi_{\calI(k)}\in\R^{\bar d}$. In particular, $\bar\phi_{\calI(k^*)}$ has the intuitive interpretation of a ``frequent row''. 

Now, consider the ``row-variational'' quantity $\sum_{i=1}^{\bar d-1}\bm{1}[\bar \phi_{i}\neq \bar \phi_{i+1}]$. Due to the requirements on the relabeling function $\calI$, we can decompose the augmented index set $[1:\bar d]$ into $d$ segments, each corresponding to $\calI(k)$ for some different $k\in[1:d]$. For any $i\in[1:\bar d-1]$ such that $i,i+1\in\calI(k)$, $\bar \phi_{i}=\bar \phi_{i+1}=\bar\phi_{\calI(k)}$ thus $\bm{1}[\bar \phi_{i}\neq \bar \phi_{i+1}]=0$. Therefore if we define $\mathrm{Switch}\subset[1:\bar d-1]$ as the collection of all indices $i\in[1:d-1]$ such that $i$ and $i+1$ belong to different segments, then $\sum_{i=1}^{\bar d-1}\bm{1}[\bar \phi_{i}\neq \bar \phi_{i+1}]=\sum_{i\in\mathrm{Switch}}\bm{1}[\bar \phi_{i}\neq \bar \phi_{i+1}]$. In plain words, the row-variation of $\bar\phi$ only depends on the variation at the edge of segments. 

Notice that for all $i\in[1:\bar d-1]$, $\bm{1}[\bar \phi_{i}\neq \bar \phi_{i+1}]\leq \bm{1}[\bar \phi_{i}\neq \bar\phi_{\calI(k^*)}]+\bm{1}[\bar \phi_{i+1}\neq \bar\phi_{\calI(k^*)}]$. Therefore we can further upper-bound $\sum_{i\in\mathrm{Switch}}\bm{1}[\bar \phi_{i}\neq \bar \phi_{i+1}]$ ``segment-wise'', by $2\sum_{k=1}^d\bm{1}[\bar \phi_{\calI(k)}\neq \bar \phi_{\calI(k^*)}]$. Due to Definition~\ref{definition:uniformity}, there are $d-d^\unif_\phi$ different values of $k$ such that $\bar \phi_{\calI(k)}\neq \bar \phi_{\calI(k^*)}$. Combining above proves that $\sum_{i=1}^{\bar d-1}\bm{1}[\bar \phi_{i}\neq \bar \phi_{i+1}]\leq 2 (d-d^\unif_\phi)$.

Finally we verify the second condition in the lemma. 
\begin{multline*}
\inner{\bar \phi(\bar p_t)}{\bar l_t}=\sum_{i=1}^{\bar d}\bar p_{t,i}\sum_{j=1}^{\bar d}\bar l_{t,j}\frac{\phi_{\calI^{-1}(i),\calI^{-1}(j)}}{\abs{\calI(\calI^{-1}(j))}}=\sum_{i=1}^{\bar d}\bar p_{t,i}\sum_{j=1}^{\bar d}l_{t,\calI^{-1}(j)}\frac{\phi_{\calI^{-1}(i),\calI^{-1}(j)}}{\abs{\calI(\calI^{-1}(j))}}\\
=\sum_{i=1}^{\bar d}\bar p_{t,i}\sum_{\calI^{-1}(j)=1}^{d}l_{t,\calI^{-1}(j)}\phi_{\calI^{-1}(i),\calI^{-1}(j)}=\sum_{\calI^{-1}(j)=1}^{d}l_{t,\calI^{-1}(j)}\sum_{i=1}^{\bar d}\bar p_{t,i}\phi_{\calI^{-1}(i),\calI^{-1}(j)}\\
=\sum_{\calI^{-1}(j)=1}^{d}l_{t,\calI^{-1}(j)}\sum_{\calI^{-1}(i)=1}^{d}\rpar{\sum_{n\in\calI(\calI^{-1}(i))}\bar p_{t,n}}\phi_{\calI^{-1}(i),\calI^{-1}(j)}\\
=\sum_{\calI^{-1}(j)=1}^{d}l_{t,\calI^{-1}(j)}\sum_{\calI^{-1}(i)=1}^{d}p_{t,\calI^{-1}(i)}\phi_{\calI^{-1}(i),\calI^{-1}(j)}=\sum_{i=1}^{d}\sum_{j=1}^{d}p_{t,i}\phi_{i,j}l_{t,j}=\inner{\phi(p_t)}{l_t}.\qedhere
\end{multline*}
\end{proof}

The following lemma constructs $\bar\phi$ which preserves a different notion of complexity based on $d^\self_\phi$. Different from the previous lemma, we use the cardinality condition $1\leq\abs{\calI(i)}\leq 2$ rather than the requirement that $\calI(i)$ only contains consecutive integers. 

\begin{lemma}\label{lemma:relabeling_self}
For any $\phi\in\calS(d)$, there exists $\bar\phi\in\calS(\bar d)$ such that the following two conditions hold. 
\begin{itemize}
\item $\bar d-d^\self_{\bar\phi}\leq 2(d-d^\self_\phi)$.
\item In Algorithm~\ref{algorithm:main} we have $\inner{\phi(p_t)}{l_t}=\inner{\bar \phi(\bar p_t)}{\bar l_t}$.
\end{itemize}
\end{lemma}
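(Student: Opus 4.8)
The plan is to mimic the construction in Lemma~\ref{lemma:relabeling_uniform}, but to exploit the cardinality condition $1\le|\calI(i)|\le 2$ rather than the consecutiveness of $\calI(i)$. Specifically, for each original index $i$ with $|\calI(i)|=1$, say $\calI(i)=\{\bar i\}$, we must ensure that $\calI^{-1}(\bar i)=i$; and for each $i$ with $|\calI(i)|=2$, say $\calI(i)=\{\bar i, \bar i'\}$, the augmented expert has been ``split'' into two copies. The natural definition is to set $\bar\phi_{\bar i,\bar j}=\phi_{\calI^{-1}(\bar i),\calI^{-1}(\bar j)}/|\calI(\calI^{-1}(\bar j))|$ exactly as before, which guarantees $\bar\phi\in\calS(\bar d)$ and, by the identical computation at the end of the proof of Lemma~\ref{lemma:relabeling_uniform}, the loss-matching identity $\inner{\phi(p_t)}{l_t}=\inner{\bar\phi(\bar p_t)}{\bar l_t}$. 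So the second bullet is essentially free, and the real content is the first bullet.

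For the first bullet, I would argue as follows. Call an original index $i$ \emph{self-mapped} if $\phi(e^{(i)})=e^{(i)}$, i.e.\ $\phi_{i,i}=1$ and $\phi_{i,j}=0$ for $j\ne i$; there are $d^\self_\phi$ such indices. I claim that every $\bar i\in[1:\bar d]$ whose preimage $i=\calI^{-1}(\bar i)$ is self-mapped \emph{and} for which $|\calI(i)|=1$ satisfies $\bar\phi(e^{(\bar i)})=e^{(\bar i)}$: indeed $\bar\phi_{\bar i,\bar j}=\phi_{i,\calI^{-1}(\bar j)}/|\calI(\calI^{-1}(\bar j))|$, which is nonzero only when $\calI^{-1}(\bar j)=i$, and since $|\calI(i)|=1$ the only such $\bar j$ is $\bar i$ itself, giving $\bar\phi_{\bar i,\bar i}=\phi_{i,i}/1=1$. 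The indices $\bar i$ that \emph{fail} to be self-mapped in $\bar\phi$ are therefore contained in the union of (a) augmented indices whose preimage is not self-mapped in $\phi$, and (b) augmented indices whose preimage $i$ has $|\calI(i)|=2$. Each non-self-mapped original index contributes at most $2$ augmented indices to (a) since $|\calI(i)|\le 2$, so $|(a)|\le 2(d-d^\self_\phi)$. For (b): an index $i$ with $|\calI(i)|=2$ contributes $2$ augmented indices, but a self-mapped such $i$ produces two augmented copies $\bar i,\bar i'$ with $\bar\phi_{\bar i,\bar i}=\bar\phi_{\bar i,\bar i'}=1/2$, i.e.\ neither copy is a self-map. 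The key counting observation is that the total number of indices with $|\calI(i)|=2$ is exactly $\bar d - d$ (since $\sum_i|\calI(i)|=\bar d$ and each term is $1$ or $2$), and $\bar d - d\le d$ because $\bar d<2d$; but more to the point, I need these ``split'' indices to be absorbable into the $d-d^\self_\phi$ budget. This is where one must be slightly careful: a self-mapped $i$ that gets split is a genuine loss. The remedy is that the user/algorithm constructs $\calI$, and for a \emph{worst-case} statement we cannot choose $\calI$, so we must bound $\bar d - d^\self_{\bar\phi}$ crudely. Writing $\bar d - d^\self_{\bar\phi} = \sum_{i}\big(|\calI(i)| - [\text{$i$ self-mapped and $|\calI(i)|=1$}]\big)$, each term is $2$ if $|\calI(i)|=2$, $1$ if $|\calI(i)|=1$ and $i$ not self-mapped, and $0$ if $|\calI(i)|=1$ and $i$ self-mapped; so the sum is $2\cdot\#\{|\calI(i)|=2\} + \#\{|\calI(i)|=1,\ i\text{ not self-mapped}\}$. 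Since $\#\{|\calI(i)|=1,\ i\text{ not self-mapped}\}\le d - d^\self_\phi$ and $\#\{|\calI(i)|=2\}\le d - d^\self_\phi$ (because every split index is, from the point of view of this bound, ``charged'' against a non-trivial index — more precisely $\#\{|\calI(i)|=2\}=\bar d-d\le d\le$ something we must reconcile with $d-d^\self_\phi$), I would reorganize to get $\bar d - d^\self_{\bar\phi}\le 2(d-d^\self_\phi)$.

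The main obstacle is precisely this last reconciliation: the inequality $\#\{|\calI(i)|=2\}\le d-d^\self_\phi$ need not hold for an arbitrary adversarially chosen $\calI$ (imagine $\phi$ the identity, so $d^\self_\phi=d$, yet $\calI$ splits a self-mapped expert, giving $\bar d-d^\self_{\bar\phi}\ge 2>0=2(d-d^\self_\phi)$). So the statement as written \emph{must} be relying on $\calI$ being chosen compatibly, or on a convention (stated in the main text) that self-mapped experts are never split — indeed the footnote construction ``keep the ordering and duplicate experts until size $\bar d$'' can be refined so that only non-self-mapped experts are duplicated. I would therefore add to the lemma's hypothesis (or recall from the algorithm's setup) that $\calI$ only splits experts $i$ with $\phi_{i,i}\ne 1$, i.e.\ $|\calI(i)|=1$ whenever $\phi(e^{(i)})=e^{(i)}$; this is consistent with the third condition on $\calI$ as long as $d^\self_\phi\le d$, hence possible, and under it $\#\{|\calI(i)|=2\}\le d-d^\self_\phi$ is immediate, and the bound $\bar d-d^\self_{\bar\phi}\le 2\#\{|\calI(i)|=2\}+\#\{|\calI(i)|=1,\ i\text{ not self-mapped}\}\le 2(d-d^\self_\phi)+(d-d^\self_\phi)$ — wait, that gives $3(d-d^\self_\phi)$, so I would instead note that the $\#\{|\calI(i)|=1,\ i\text{ not self-mapped}\}$ term and the split-index term do not both fire on the same original index and together they range over at most $d-d^\self_\phi$ original indices each contributing at most $2$, giving the clean bound $\bar d-d^\self_{\bar\phi}\le 2(d-d^\self_\phi)$. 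The verification of the loss identity is then a verbatim repeat of the display chain at the end of Lemma~\ref{lemma:relabeling_uniform}'s proof (the telescoping over $\calI(i)$ using $p_{t,i}=\sum_{n\in\calI(i)}\bar p_{t,n}$), and I would simply refer to it rather than re-derive it.
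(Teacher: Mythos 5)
Your proposal has a genuine gap, and to your credit you spot it yourself midway: the construction $\bar\phi_{\bar i,\bar j}=\phi_{\calI^{-1}(\bar i),\calI^{-1}(\bar j)}/|\calI(\calI^{-1}(\bar j))|$ borrowed from Lemma~\ref{lemma:relabeling_uniform} does not preserve self-maps across a split index. If $\phi(e^{(i)})=e^{(i)}$ but $|\calI(i)|=2$ with $\calI(i)=\{\bar i,\bar i'\}$, your $\bar\phi$ has $\bar\phi_{\bar i,\bar i}=\bar\phi_{\bar i,\bar i'}=1/2$, so neither augmented copy is self-mapped, and the bound $\bar d-d^\self_{\bar\phi}\le 2(d-d^\self_\phi)$ fails (take $\phi=\bm{I}_d$). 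Your attempted repair---requiring $\calI$ to split only indices $i$ with $\phi_{i,i}\ne 1$---is not available: $\calI$ is a single user-supplied relabeling chosen before the game starts, and the lemma must hold simultaneously for every comparator $\phi\in\calS(d)$, including ones where every index is self-mapped. You cannot tailor $\calI$ to the comparator.

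The paper avoids this by using a genuinely different construction of $\bar\phi$. Define the representative set $\calI^*=\{\min\calI(k):k\in[1:d]\}$ and set
\begin{equation*}
\bar\phi_{i,j}=
\begin{cases}
\phi_{\calI^{-1}(i),\calI^{-1}(j)}, & j=i;\\
\phi_{\calI^{-1}(i),\calI^{-1}(j)}, & j\notin\calI(\calI^{-1}(i)),\ j\in\calI^*;\\
0, & \text{else}.
\end{cases}
\end{equation*}
That is, the ``within-block'' probability $\phi_{\calI^{-1}(i),\calI^{-1}(i)}$ is placed entirely on the diagonal $(i,i)$, and for every other original target $k$, its probability $\phi_{\calI^{-1}(i),k}$ is placed entirely on the single representative column $\min\calI(k)$ rather than being spread across $\calI(k)$. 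With this choice, if $\phi_{\calI^{-1}(i)}=e^{(\calI^{-1}(i))}$ then $\bar\phi_{i,i}=1$ and all off-diagonal entries vanish, so $\bar\phi_i=e^{(i)}$; this holds for \emph{every} $i\in\calI(\calI^{-1}(i))$, not just one copy. Hence the non-self-mapped rows of $\bar\phi$ are contained in $\bigcup_{k:\phi_k\ne e^{(k)}}\calI(k)$, and $|\calI(k)|\le 2$ gives $\bar d-d^\self_{\bar\phi}\le 2(d-d^\self_\phi)$ with no condition on $\calI$ beyond what Algorithm~\ref{algorithm:main} already imposes. The loss identity $\inner{\phi(p_t)}{l_t}=\inner{\bar\phi(\bar p_t)}{\bar l_t}$ must be re-verified for this new $\bar\phi$ (it is not a ``verbatim repeat'' of the uniformity case, since the mass is routed to representatives rather than split evenly), but the computation is routine: for each augmented row $i$, summing $\bar\phi_{i,j}\bar l_{t,j}$ over $j$ reproduces $\sum_{k=1}^d\phi_{\calI^{-1}(i),k}\,l_{t,k}$ because every original column index $k$ is hit exactly once (either as the diagonal block or via its representative), and then the outer sum over $i$ telescopes using $p_{t,\calI^{-1}(i)}=\sum_{n\in\calI(\calI^{-1}(i))}\bar p_{t,n}$.

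In short: reusing the Lemma~\ref{lemma:relabeling_uniform} construction here is the wrong move, not merely a bookkeeping nuisance, and a $\phi$-dependent choice of $\calI$ is not permitted. The fix is a different embedding of $\phi$ into $\calS(\bar d)$ that pins mass on the diagonal and on one representative per block.
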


\begin{proof}[Proof of Lemma~\ref{lemma:relabeling_self}]
Again we construct $\bar\phi$ entry-wise. Define the collection of ``important indices'' $\calI^*\subset[1:\bar d]$ as $\{\min\calI(k);k\in[1:d]\}$, and by construction $\abs{\calI^*}=d$. For all $i,j\in[1:\bar d]$, we define the ($i,j$)-th entry of $\bar\phi$ as
\begin{equation*}
\bar\phi_{i,j}=\begin{cases}
\phi_{\calI^{-1}(i),\calI^{-1}(j)},& j=i;\\
\phi_{\calI^{-1}(i),\calI^{-1}(j)},& j\notin\calI(\calI^{-1}(i)),j\in\calI^*;\\
0,& \textrm{else}.
\end{cases}
\end{equation*}
It is simple to verify that $\bar\phi\in\calS(\bar d)$. 

Recall that $e^{(i)}$ denotes the unit vector along the $i$-th coordinate, whose dimensionality depends on the context. For all $i\in[1:\bar d]$, we now consider $\bar\phi_i$, the $i$-th row of $\bar\phi$. If $\phi_{\calI^{-1}(i)}=e^{(\calI^{-1}(i))}$, then $\bar\phi_{i,i}=\phi_{\calI^{-1}(i),\calI^{-1}(i)}=1$, which means $\bar\phi_i=e^{(i)}$. Since there are $d-d^\self_\phi$ different values of $k\in[1:d]$ such that $\phi_k\neq e^{(k)}$, we can further use the cardinality condition on the relabeling function $\calI$ to show that there are at most $2(d-d^\self_\phi)$ different values of $i\in[1:\bar d]$ such that $\bar\phi_i\neq e^{(i)}$. Equivalently, $\bar d-d^\self_{\bar\phi}\leq 2(d-d^\self_\phi)$.

Next we verify the second condition in the lemma. 
\begin{align*}
\inner{\bar \phi(\bar p_t)}{\bar l_t}&=\sum_{i=1}^{\bar d}\sum_{j=1}^{\bar d}\bar p_{t,i}\bar\phi_{i,j}\bar l_{t,j}\\
&=\sum_{i=1}^{\bar d}\bar p_{t,i}\rpar{\phi_{\calI^{-1}(i),\calI^{-1}(i)}l_{t,\calI^{-1}(i)}+\sum_{j\notin\calI(\calI^{-1}(i)),j\in\calI^*}\phi_{\calI^{-1}(i),\calI^{-1}(j)}l_{t,\calI^{-1}(j)}}\\
&=\sum_{i=1}^{\bar d}\bar p_{t,i}\rpar{\phi_{\calI^{-1}(i),\calI^{-1}(i)}l_{t,\calI^{-1}(i)}+\sum_{\calI^{-1}(j)\neq\calI^{-1}(i)}\phi_{\calI^{-1}(i),\calI^{-1}(j)}l_{t,\calI^{-1}(j)}}\\
&=\sum_{i=1}^{\bar d}\bar p_{t,i}\sum_{\calI^{-1}(j)=1}^d\phi_{\calI^{-1}(i),\calI^{-1}(j)}l_{t,\calI^{-1}(j)}\\
&=\sum_{\calI^{-1}(i)=1}^{d}\rpar{\sum_{n\in\calI(\calI^{-1}(i))}\bar p_{t,n}}\sum_{\calI^{-1}(j)=1}^d\phi_{\calI^{-1}(i),\calI^{-1}(j)}l_{t,\calI^{-1}(j)}\\
&=\sum_{\calI^{-1}(j)=1}^{d}\sum_{\calI^{-1}(i)=1}^{d}p_{t,\calI^{-1}(i)}\phi_{\calI^{-1}(i),\calI^{-1}(j)}l_{t,\calI^{-1}(j)}\\
&=\sum_{i=1}^{d}\sum_{j=1}^{d}p_{t,i}\phi_{i,j}l_{t,j}\\
&=\inner{\phi(p_t)}{l_t}.\qedhere
\end{align*}
\end{proof}

\subsection{Matrix Features}\label{subsection:detail_matrix_feature}

Based on Section~\ref{subsection:wavelet}, we now consider the representation of an arbitrary augmented action modification rule $\bar\phi^*\in\calS(\bar d)$ using $\calB$, i.e., the choice of 
$\Phi^{*,(b)}$ such that $\bar\phi^*=\sum_{b\in\calB}\Phi^{*,(b)}b$. Specifically, we upper-bound $\abs{\Phi^{*,(b)}}$ for two different representations defined as follows, which appears in the proof of Theorem~\ref{theorem:main}.
\begin{itemize}
\item \textbf{Representation 1.}\quad If $b=\bm{I}_{\bar d}$, then $\Phi^{*,(b)}=0$. For all $b\in\calB\backslash\{\bm{I_{\bar d}}\}$, $\Phi^{*,(b)}=\frac{\inner{\bar\phi^*}{b}}{\inner{b}{b}}$.
\item \textbf{Representation 2.}\quad If $b=\bm{I}_{\bar d}$, then $\Phi^{*,(b)}=1$. For all $b\in\calB\backslash\{\bm{I_{\bar d}}\}$, $\Phi^{*,(b)}=\frac{\inner{\bar\phi^*-\bm{I}_{\bar d}}{b}}{\inner{b}{b}}$.
\end{itemize}
Due to $\calB\backslash\{\bm{I_{\bar d}}\}$ being an orthogonal basis of $\R^{\bar d\times\bar d}$, we have $\bar\phi^*=\sum_{b\in\calB}\Phi^{*,(b)}b$ in both cases. Furthermore, since all $b\in\calB\backslash\{\bm{I_{\bar d}}\}$ can be equivalently expressed as $b=h\otimes e^{(j)}$ for some $h\in\calH$ and $e^{(j)}\in\calE$, we also write the associated $\Phi^{*,(b)}$ equivalently as $\Phi^{*,(h,j)}$. Let $I^{(h)}\subset[1:\bar d]$ be the support of $h$, i.e., the collection of indices $i$ such that the entry $h_i\neq 0$. 

\begin{lemma}\label{lemma:coefficient_magnitude}
For all $\bar\phi^*\in\calS(\bar d)$ and $h\in\calH$,
\begin{equation*}
\sum_{j=1}^{\bar d}\frac{\abs{\inner{\bar\phi^*}{h\otimes e^{(j)}}}}{\inner{h\otimes e^{(j)}}{h\otimes e^{(j)}}}\leq 1,\quad\mathrm{and},\quad \sum_{j=1}^{\bar d}\frac{\abs{\inner{\bar\phi^*-\bm{I}_{\bar d}}{h\otimes e^{(j)}}}}{\inner{h\otimes e^{(j)}}{h\otimes e^{(j)}}}\leq 2.
\end{equation*}
\end{lemma}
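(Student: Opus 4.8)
The plan is to reduce both inequalities to elementary bookkeeping with the row-stochasticity of $\bar\phi^*$, treating every $h\in\calH$ (including the all-one vector $h^{(S,0)}$) uniformly. First I would unpack the two ratios. For $b=h\otimes e^{(j)}$, the matrix $h\otimes e^{(j)}\in\R^{\bar d\times\bar d}$ has its $j$-th column equal to $h$ and all other columns zero; hence its squared Frobenius norm is $\inner{h\otimes e^{(j)}}{h\otimes e^{(j)}}=\norm{h}_2^2=\abs{I^{(h)}}$, using that $h_i=\pm 1$ for $i\in I^{(h)}$. Likewise $\inner{\bar\phi^*}{h\otimes e^{(j)}}=\sum_{i\in I^{(h)}}\bar\phi^*_{i,j}h_i$. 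So the first claim is exactly the statement $\sum_{j=1}^{\bar d}\abs{\sum_{i\in I^{(h)}}\bar\phi^*_{i,j}h_i}\le\abs{I^{(h)}}$.

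To get this, I would apply the triangle inequality termwise: since $\abs{h_i}=1$ on $I^{(h)}$ and $\bar\phi^*_{i,j}\ge 0$, we have $\abs{\sum_{i\in I^{(h)}}\bar\phi^*_{i,j}h_i}\le\sum_{i\in I^{(h)}}\bar\phi^*_{i,j}$. Summing over $j$, swapping the order of summation, and using that each row of $\bar\phi^*$ sums to $1$ gives $\sum_{j=1}^{\bar d}\sum_{i\in I^{(h)}}\bar\phi^*_{i,j}=\sum_{i\in I^{(h)}}\sum_{j=1}^{\bar d}\bar\phi^*_{i,j}=\abs{I^{(h)}}$; dividing through by $\abs{I^{(h)}}$ yields the first inequality. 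For the second, the $(i,j)$-entry of $\bar\phi^*-\bm{I}_{\bar d}$ is $\bar\phi^*_{i,j}-\bm{1}[i=j]$, so the same computation gives $\abs{\inner{\bar\phi^*-\bm{I}_{\bar d}}{h\otimes e^{(j)}}}\le\sum_{i\in I^{(h)}}\abs{\bar\phi^*_{i,j}-\bm{1}[i=j]}$. Summing over $j$ and swapping again produces $\sum_{i\in I^{(h)}}\norm{\bar\phi^*_i-e^{(i)}}_1$, and since each $\bar\phi^*_i$ is a probability vector we have $\norm{\bar\phi^*_i-e^{(i)}}_1=2(1-\bar\phi^*_{i,i})\le 2$, so the sum is at most $2\abs{I^{(h)}}$; dividing by $\abs{I^{(h)}}$ gives the bound $2$.

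I do not expect a genuine obstacle here: the one step that carries all the content is the termwise triangle-inequality bound $\abs{\sum_i\bar\phi^*_{i,j}h_i}\le\sum_i\abs{\bar\phi^*_{i,j}h_i}$, which is precisely what decouples the (adversarial) sign pattern of the Haar vector $h$ from the benign nonnegativity-and-row-sum structure of $\bar\phi^*$. Everything after that is a swap of finite sums, so the routine calculation above is essentially the whole proof.
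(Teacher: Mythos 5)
Your proof is correct and takes essentially the same approach as the paper: the first inequality is proved identically via the termwise triangle inequality followed by swapping sums and row-stochasticity. For the second inequality the paper simply applies the triangle inequality $\abs{\inner{\bar\phi^*-\bm{I}_{\bar d}}{b}}\le\abs{\inner{\bar\phi^*}{b}}+\abs{\inner{\bm{I}_{\bar d}}{b}}$ and invokes the first part twice (using $\bm{I}_{\bar d}\in\calS(\bar d)$), whereas you compute $\sum_{i\in I^{(h)}}\norm{\bar\phi^*_i-e^{(i)}}_1=\sum_{i\in I^{(h)}}2(1-\bar\phi^*_{i,i})\le 2\abs{I^{(h)}}$ directly; this is a minor variation that yields the same constant by the same underlying mechanism.
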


That is, $\sum_{j=1}^{\bar d}\abs{\Phi^{*,(h,j)}}$ is at most a constant for both the two representations introduced above. 

\begin{proof}[Proof of Lemma~\ref{lemma:coefficient_magnitude}]
Throughout the proof we write $b=h\otimes e^{(j)}$ for conciseness. By construction $\inner{b}{b}=\inner{h}{h}=\abs{I^{(h)}}$, therefore
\begin{multline*}
\sum_{j=1}^{\bar d}\frac{\abs{\inner{\bar\phi^*}{b}}}{\inner{b}{b}}=\frac{1}{\abs{I^{(h)}}}\sum_{j=1}^{\bar d}\abs{\inner{\bar\phi^*}{b}}=\frac{1}{\abs{I^{(h)}}}\sum_{j=1}^{\bar d}\abs{\sum_{i\in I^{(h)}}\bar\phi^*_{i,j}h_{i}}\leq \frac{1}{\abs{I^{(h)}}}\sum_{j=1}^{\bar d}\sum_{i\in I^{(h)}}\abs{\bar\phi^*_{i,j}h_{i}}\\=\frac{1}{\abs{I^{(h)}}}\sum_{j=1}^{\bar d}\sum_{i\in I^{(h)}}\bar\phi^*_{i,j}
=\frac{1}{\abs{I^{(h)}}}\sum_{i\in I^{(h)}}\rpar{\sum_{j=1}^{\bar d}\bar\phi^*_{i,j}}=1,
\end{multline*}
Since $\bm{I}_{\bar d}\in\calS(\bar d)$, we also have $\sum_{j=1}^{\bar d}\frac{\abs{\inner{\bm{I}_{\bar d}}{b}}}{\inner{b}{b}}\leq 1$, thus
\begin{equation*}
\sum_{j=1}^{\bar d}\frac{\abs{\inner{\bar\phi^*-\bm{I}_{\bar d}}{b}}}{\inner{b}{b}}\leq \sum_{j=1}^{\bar d}\frac{\abs{\inner{\bar\phi^*}{b}}}{\inner{b}{b}}+\sum_{j=1}^{\bar d}\frac{\abs{\inner{\bm{I}_{\bar d}}{b}}}{\inner{b}{b}}\leq 2.\qedhere
\end{equation*}
\end{proof}

The following lemma characterizes the most important property of the Haar wavelet, for our use case. We remark that the $\bar\phi$ matrix below does not have to be a stochastic matrix, and we let $\bar\phi_i$ denote its $i$-th row. 

\begin{lemma}\label{lemma:haar}
Consider an arbitrary $\bar\phi\in\R^{\bar d\times\bar d}$ and $h\in\calH$. If for all $i_1,i_2\in I^{(h)}$ we have $\bar\phi_{i_1}=\bar\phi_{i_2}$, then for all $j\in\bar d$, $\abs{\inner{\bar\phi}{h\otimes e^{(j)}}}=0$.
\end{lemma}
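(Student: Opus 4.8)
The plan is to compute $\inner{\bar\phi}{h\otimes e^{(j)}}$ directly and show that the row-constancy hypothesis forces it to vanish. Recall that $h\otimes e^{(j)}$ is the matrix whose $(i,j')$-th entry is $h_i$ if $j'=j$ and $0$ otherwise. Hence by the definition of the matrix inner product as the Euclidean inner product of vectorizations,
\begin{equation*}
\inner{\bar\phi}{h\otimes e^{(j)}}=\sum_{i=1}^{\bar d}\sum_{j'=1}^{\bar d}\bar\phi_{i,j'}(h\otimes e^{(j)})_{i,j'}=\sum_{i=1}^{\bar d}\bar\phi_{i,j}h_i=\sum_{i\in I^{(h)}}\bar\phi_{i,j}h_i,
\end{equation*}
where the last equality uses that $h_i=0$ outside the support $I^{(h)}$.

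Next I would split into the two cases for $h$. If $h=h^{(S,0)}$ is the all-one vector, then $I^{(h)}=[1:\bar d]$ and the hypothesis says all rows $\bar\phi_i$ coincide; call the common row $w$, so $\bar\phi_{i,j}=w_j$ for every $i$. But then $h^{(S,0)}$ being the constant vector has entries summing to... actually here $h^{(S,0)}_i=1$ for all $i$, so $\sum_{i\in I^{(h)}}\bar\phi_{i,j}h_i=\bar d\cdot w_j$, which need not vanish. So this edge case must be excluded; indeed the statement as used concerns $h=h^{(s,l)}$ with $s\in[1:S]$, which all have $\sum_i h_i=0$. For such $h$, by the structure of the Haar vector, $h$ takes value $+1$ on the first half of its support $I^{(h)}$ and $-1$ on the second half, so $\sum_{i\in I^{(h)}}h_i=0$. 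Using $\bar\phi_{i,j}=v_j$ for a common value $v_j$ (the common $j$-th entry of the shared row, which exists by hypothesis since $I^{(h)}\subseteq I^{(h)}$), we get $\sum_{i\in I^{(h)}}\bar\phi_{i,j}h_i=v_j\sum_{i\in I^{(h)}}h_i=0$.

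The only real subtlety is the edge case $h=h^{(S,0)}$: the claim as literally stated is false there, but this does not matter since in the proof of Theorem~\ref{theorem:main} the all-one feature contributes to the quantile/external part and is handled by the coefficient bound in Lemma~\ref{lemma:coefficient_magnitude} rather than by this sparsity lemma. So I would either restrict the statement to $h=h^{(s,l)}$, $s\in[1:S]$, or note that for $h^{(S,0)}$ the conclusion should instead read that $\inner{\bar\phi}{h^{(S,0)}\otimes e^{(j)}}$ depends only on the (common) value $v_j$. In the generic-scale case the argument is just the two displayed identities above combined with $\sum_{i\in I^{(h)}}h_i=0$, so there is essentially no obstacle — the whole content is that a $\pm 1$ "difference" feature annihilates any signal that is constant on its support, which is precisely the defining property of the Haar wavelet recalled in Section~\ref{subsection:wavelet}.
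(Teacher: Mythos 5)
Your argument matches the paper's proof line for line: expand $\inner{\bar\phi}{h\otimes e^{(j)}}=\sum_{i\in I^{(h)}}\bar\phi_{i,j}h_i$, use the row-constancy hypothesis to factor out the common entry $c_j$, and conclude from $\sum_{i\in I^{(h)}}h_i=0$.

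Your observation about the edge case $h=h^{(S,0)}$ is correct and worth flagging: the all-ones vector has $\sum_{i\in I^{(h)}}h_i=\bar d\neq 0$, so the conclusion fails in general when $h=h^{(S,0)}$, and the paper's own one-line proof ($\abs{c_j}\abs{\sum_{i\in I^{(h)}}h_i}=0$) implicitly relies on the zero-sum property and has exactly the same blind spot. As you note, this is harmless in context: Lemma~\ref{lemma:haar} is invoked only inside the $\Diamond$ term in the proofs of Theorems~\ref{theorem:external} and~\ref{theorem:internal}, where the sum is explicitly restricted to $h\in\calH\setminus\{h^{(S,0)}\}$, and the $h^{(S,0)}$ contribution (``PartTwo'') is controlled separately via Lemma~\ref{lemma:coefficient_magnitude} and Eq.~(\ref{eq:termtwo}). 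So the statement should technically carry the hypothesis $h\in\calH\setminus\{h^{(S,0)}\}$, but nothing downstream breaks.
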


\begin{proof}[Proof of Lemma~\ref{lemma:haar}]
From the condition in the lemma we have $\bar\phi_{i,j};i\in I^{(h)}$ being equal to some $c_j\in\R$ that does not depend on $i$. Therefore,
\begin{equation*}
\abs{\inner{\bar\phi}{h\otimes e^{(j)}}}=\abs{\sum_{i\in I^{(h)}}\bar\phi_{i,j}h_{i}}=\abs{c_j\sum_{i\in I^{(h)}}h_{i}}=\abs{c_j}\abs{\sum_{i\in I^{(h)}}h_{i}}=0.\qedhere
\end{equation*}
\end{proof}

\subsection{Constraint Oracle}

This subsection analyzes our procedure from Section~\ref{subsection:constraint} which enforces the constraint $\calS(\bar d)$. We specifically place our analysis in the context of Algorithm~\ref{algorithm:main}. The following lemma closely mirrors the analysis of \citep{luo2015achieving,orabona2016coin}, and we provide the proof for completeness.

\begin{lemma}[Appendix~D of \citep{orabona2016coin}, adapted]\label{lemma:regret_wrapper}
Regarding the quantities $\bar g_t^\improper$, $\bar \phi_t^\improper$, $\bar g_t$ and $\bar\phi_t$ in Algorithm~\ref{algorithm:main}, we have for all $\bar\phi^*\in\calS(\bar d)$,
\begin{equation*}
\inner{\bar g_t}{\bar \phi_t-\bar \phi^*}\leq \inner{\bar g_t^\improper}{\bar \phi_t^\improper-\bar \phi^*}.
\end{equation*}
\end{lemma}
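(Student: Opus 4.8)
The plan is to decompose the inequality across the two stages of the projection/gradient-processing pipeline and handle each stage separately, exactly as in the classical unconstrained-to-LEA wrapper of \citep{luo2015achieving,orabona2016coin}, but now in the matrix setting and entrywise. Concretely, I would show the two intermediate inequalities
\begin{equation*}
\inner{\bar g_t}{\bar \phi_t-\bar \phi^*}\leq \inner{\bar g_t^+}{\bar \phi_t^+-\bar \phi^*}\leq \inner{\bar g_t^\improper}{\bar \phi_t^\improper-\bar \phi^*},
\end{equation*}
and then chain them. Since all the maps act rowwise (Stage 2) or entrywise (Stage 1), everything reduces to scalar inequalities that can be summed back up.

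\textbf{Stage 2 ($\calS(\bar d)\leftarrow\R_+^{\bar d\times\bar d}$).} Here I would work one row at a time. Fix a row index $i$. By definition $\bar g^+_{t,i}=\bar g_{t,i}-\inner{\bar g_{t,i}}{\bar\phi_{t,i}}\bm{1}$ (the centering), and $\bar\phi_{t,i}=\bar\phi^+_{t,i}/\norm{\bar\phi^+_{t,i}}_1$ when $\norm{\bar\phi^+_{t,i}}_1\neq 0$. The key computation is that, because $\bar\phi^*_i$ and $\bar\phi_{t,i}$ both lie in $\Delta(\bar d)$ so their entries sum to $1$, the added constant $\inner{\bar g_{t,i}}{\bar\phi_{t,i}}\bm{1}$ contributes $\inner{\bar g_{t,i}}{\bar\phi_{t,i}}(\norm{\bar\phi^+_{t,i}}_1-1)$ to $\inner{\bar g^+_{t,i}}{\bar\phi^+_{t,i}-\bar\phi^*_i}$; combined with $\inner{\bar g_{t,i}}{\bar\phi^+_{t,i}}=\norm{\bar\phi^+_{t,i}}_1\inner{\bar g_{t,i}}{\bar\phi_{t,i}}$ one gets $\inner{\bar g^+_{t,i}}{\bar\phi^+_{t,i}-\bar\phi^*_i}=\inner{\bar g_{t,i}}{\bar\phi_{t,i}-\bar\phi^*_i}$, i.e.\ equality (not merely inequality) in the regular case. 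I would then separately check the degenerate case $\norm{\bar\phi^+_{t,i}}_1=0$, where $\bar\phi_{t,i}=[1/\bar d,\ldots,1/\bar d]$ and $\bar g^+_{t,i}=\bar g_{t,i}-\inner{\bar g_{t,i}}{\bar\phi_{t,i}}\bm{1}$ still has entries summing to $0$, so $\inner{\bar g^+_{t,i}}{\bar\phi^+_{t,i}-\bar\phi^*_i}=\inner{\bar g^+_{t,i}}{-\bar\phi^*_i}=-\inner{\bar g_{t,i}}{\bar\phi^*_i}+\inner{\bar g_{t,i}}{\bar\phi_{t,i}}=\inner{\bar g_{t,i}}{\bar\phi_{t,i}-\bar\phi^*_i}$ again; here I should double-check the definition of $\bar g^+_{t,i}$ in this corner case but it still holds since $\bar\phi_{t,i}$ is well-defined. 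Summing over $i$ gives $\inner{\bar g_t}{\bar\phi_t-\bar\phi^*}=\inner{\bar g^+_t}{\bar\phi^+_t-\bar\phi^*}$.

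\textbf{Stage 1 ($\R_+^{\bar d\times\bar d}\leftarrow\R^{\bar d\times\bar d}$).} This stage I would handle entrywise. Fix $(i,j)$ and abbreviate $a=\bar\phi^\improper_{t,i,j}$, $a^+=\bar\phi^+_{t,i,j}=\max\{a,0\}$, $g^+=\bar g^+_{t,i,j}$, $g^\improper=\bar g^\improper_{t,i,j}$, and the comparator entry $u=\bar\phi^*_{i,j}\ge 0$. The claim is $g^+(a^+-u)\le g^\improper(a-u)$. If $a\ge 0$ then $a^+=a$ and $g^\improper=g^+$, so equality holds. If $a<0$ then $a^+=0$ and $g^\improper=\min\{g^+,0\}$; the inequality to check is $-g^+u\le g^\improper(a-u)$. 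When $g^+\le 0$: $g^\improper=g^+$ and $g^+(a-u)-(-g^+u)=g^+ a\ge 0$ since $g^+\le 0$ and $a<0$. When $g^+>0$: $g^\improper=0$, so the RHS is $0$ and the LHS is $-g^+u\le 0$ since $g^+>0,u\ge 0$. In all cases the inequality holds; summing over $(i,j)$ gives $\inner{\bar g^+_t}{\bar\phi^+_t-\bar\phi^*}\le\inner{\bar g^\improper_t}{\bar\phi^\improper_t-\bar\phi^*}$, and chaining the two stages completes the proof.

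\textbf{Main obstacle.} The routine case ($a\ge 0$ in Stage 1, and the non-degenerate row in Stage 2) gives exact equality, so the only real care is in the degenerate/boundary cases: the $\norm{\bar\phi^+_{t,i}}_1=0$ row in Stage 2, and the sign bookkeeping for $g^+$ in the $a<0$ branch of Stage 1. The essential point making everything work is that $\bar\phi^*$ is a stochastic matrix, so its rows sum to one (used in Stage 2) and its entries are nonnegative (used in Stage 1) — I would make sure to invoke $\bar\phi^*\in\calS(\bar d)$ explicitly at both places. Everything else is an entrywise or rowwise case split with no hidden analytic content.
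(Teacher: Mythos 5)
Your proposal is correct and follows essentially the same route as the paper: decompose the claim into the two stages of the wrapper, show Stage~2 (row-wise normalization) gives exact equality using $\inner{\bm{1}}{\bar\phi^*_i}=1$, and show Stage~1 (entrywise clipping) gives the inequality using $\bar\phi^*_{i,j}\ge 0$. The only cosmetic difference is that your Stage~1 argument is written as an explicit scalar case split on the sign of $\bar g^+_{t,i,j}$, while the paper groups the terms algebraically into $-\min\{\bar g^+_{t,i,j},0\}\bar\phi^\improper_{t,i,j}+\bar\phi^*_{i,j}(\min\{\bar g^+_{t,i,j},0\}-\bar g^+_{t,i,j})$ and argues both summands are nonpositive at once; these are the same computation.
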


\begin{proof}[Proof of Lemma~\ref{lemma:regret_wrapper}]
Recall our notation that adding a subscript $i$ to a matrix denotes its $i$-th row, and adding a pair of subscripts $i,j$ to a matrix denotes its ($i,j$)-th entry. Throughout this proof we will also use $\bm{1}$ to represent the all-one vector. 

Starting from the Stage 2 of the wrapper, we have
\begin{align*}
\inner{\bar g_t}{\bar \phi_t-\bar \phi^*}-\inner{\bar g^+_t}{\bar \phi^+_t-\bar \phi^*}&=\sum_{i\in[1:\bar d]}\rpar{\inner{\bar g_{t,i}}{\bar \phi_{t,i}-\bar \phi^*_i}-\inner{\bar g^+_{t,i}}{\bar \phi^+_{t,i}-\bar \phi^*_i}}\\
&=\sum_{i\in[1:\bar d]}\rpar{\inner{\bar g_{t,i}}{\bar \phi_{t,i}-\bar \phi^*_i}-\inner{\bar g_{t,i}-\inner{\bar g_{t,i}}{\bar\phi_{t,i}}\bm{1}}{\bar \phi^+_{t,i}-\bar \phi^*_i}}\tag{Definition of $\bar g^+_{t,i}$}\\
&=\sum_{i\in[1:\bar d]}\rpar{\inner{\bar g_{t,i}}{\bar \phi_{t,i}-\bar \phi^+_{t,i}}+\inner{\bar g_{t,i}}{\bar\phi_{t,i}}\inner{\bm{1}}{\bar \phi^+_{t,i}-\bar \phi^*_i}}.
\end{align*}
Regarding the summand on the RHS, there are two cases. 
\begin{itemize}
\item If $\norm{\bar \phi^+_{t,i}}_1=0$, then since $\bar \phi^*_i\in\Delta(\bar d)$,
\begin{equation*}
\inner{\bar g_{t,i}}{\bar \phi_{t,i}-\bar \phi^+_{t,i}}+\inner{\bar g_{t,i}}{\bar\phi_{t,i}}\inner{\bm{1}}{\bar \phi^+_{t,i}-\bar \phi^*_i}=\inner{\bar g_{t,i}}{\bar \phi_{t,i}}+\inner{\bar g_{t,i}}{\bar\phi_{t,i}}\inner{\bm{1}}{-\bar \phi^*_i}=0.
\end{equation*}
\item If $\norm{\bar \phi^+_{t,i}}_1\neq 0$, then using $\bar \phi_{t,i}=\bar \phi^+_{t,i}/\norm{\bar \phi^+_{t,i}}_1$ and $\bar \phi^+_{t,i}\in\R_+^{\bar d}$,
\begin{align*}
&\inner{\bar g_{t,i}}{\bar \phi_{t,i}-\bar \phi^+_{t,i}}+\inner{\bar g_{t,i}}{\bar\phi_{t,i}}\inner{\bm{1}}{\bar \phi^+_{t,i}-\bar \phi^*_i}\\
=~&\inner{\bar g_{t,i}}{\frac{\bar \phi^+_{t,i}}{\norm{\bar \phi^+_{t,i}}_1}-\bar \phi^+_{t,i}}+\inner{\bar g_{t,i}}{\frac{\bar\phi^+_{t,i}}{\norm{\bar \phi^+_{t,i}}_1}}\inner{\bm{1}}{\bar \phi^+_{t,i}-\bar \phi^*_i}\\
=~&\inner{\bar g_{t,i}}{\frac{\bar \phi^+_{t,i}}{\norm{\bar \phi^+_{t,i}}_1}-\bar \phi^+_{t,i}}+\inner{\bar g_{t,i}}{\frac{\bar\phi^+_{t,i}}{\norm{\bar \phi^+_{t,i}}_1}}\rpar{\norm{\bar \phi^+_{t,i}}_1-1}\\
=~&0.
\end{align*}
\end{itemize}
Therefore $\inner{\bar g_t}{\bar \phi_t-\bar \phi^*}=\inner{\bar g^+_t}{\bar \phi^+_t-\bar \phi^*}$.

Next, consider Stage 1 of the wrapper.
\begin{align*}
&\inner{\bar g^+_t}{\bar \phi^+_t-\bar \phi^*}-\inner{\bar g_t^\improper}{\bar \phi_t^\improper-\bar \phi^*}\\
=~&\sum_{i,j\in[1:\bar d]}\spar{\bar g^+_{t,i,j}(\bar \phi^+_{t,i,j}-\bar \phi^*_{i,j})-\bar g^\improper_{t,i,j}(\bar \phi^\improper_{t,i,j}-\bar \phi^*_{i,j})}\\
=~&\sum_{i,j; \bar \phi^\improper_{t,i,j}<0}\spar{\bar g^+_{t,i,j}(\bar \phi^+_{t,i,j}-\bar \phi^*_{i,j})-\bar g^\improper_{t,i,j}(\bar \phi^\improper_{t,i,j}-\bar \phi^*_{i,j})}\\
=~&\sum_{i,j; \bar \phi^\improper_{t,i,j}<0}\spar{\bar g^+_{t,i,j}(-\bar \phi^*_{i,j})-\min\left\{\bar g^+_{t,i,j},0\right\}(\bar \phi^\improper_{t,i,j}-\bar \phi^*_{i,j})}\\
=~&\sum_{i,j; \bar \phi^\improper_{t,i,j}<0}\spar{-\min\left\{\bar g^+_{t,i,j},0\right\}\bar \phi^\improper_{t,i,j}+\bar \phi^*_{i,j}(\min\left\{\bar g^+_{t,i,j},0\right\}-\bar g^+_{t,i,j})}\\
\leq~& 0.
\end{align*}
Combining it with the result of Stage 2 above completes the proof.
\end{proof}

In the next lemma, for all $b\in \calB\backslash\{\bm{I_{\bar d}}\}$ we write it equivalently as $b=h\otimes e^{(j)}$ for some $h\in\calH$ and $e^{(j)}\in\calE$. Let $I^{(h)}\subset[1:\bar d]$ be the support of the vector $h$. We emphasize that the lemma hinges on the structure of the Haar wavelet. 

\begin{lemma}\label{lemma:gradient_norm}
With $b=h\otimes e^{(j)}$, the quantity $\inner{\bar g_t^\improper}{b}$ in Algorithm~\ref{algorithm:main} satisfies
\begin{equation*}
\abs{\inner{\bar g_t^\improper}{b}}\leq 2\sum_{i\in I^{(h)}}\bar p_{t,i}.
\end{equation*}
Furthermore, $\abs{\inner{\bar g_t^\improper}{\bm{I}_{\bar d}}}\leq 2$.
\end{lemma}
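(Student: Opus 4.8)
The plan is to unwind the two-stage gradient processing oracle from Appendix~\ref{subsection:detail_constraint} and exploit that each stage only shrinks entries in absolute value, combined with the sign/support structure of the Haar vector $h$. First I would fix $b = h\otimes e^{(j)}$ and observe that $\inner{\bar g_t^\improper}{b} = \sum_{i\in I^{(h)}} \bar g^\improper_{t,i,j} h_i$, so only the $j$-th column of $\bar g_t^\improper$ restricted to the support $I^{(h)}$ matters, and only through entries weighted by $\pm 1$. Hence $\abs{\inner{\bar g_t^\improper}{b}} \le \sum_{i\in I^{(h)}} \abs{\bar g^\improper_{t,i,j}}$.

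Next I would trace each entry $\bar g^\improper_{t,i,j}$ back through Stage~1 and Stage~2. By the Stage~1 definition, $\bar g^\improper_{t,i,j}$ is either $\bar g^+_{t,i,j}$ or $\min\{\bar g^+_{t,i,j},0\}$; in both cases $\abs{\bar g^\improper_{t,i,j}} \le \abs{\bar g^+_{t,i,j}}$. By the Stage~2 definition, $\bar g^+_{t,i} = \bar g_{t,i} - \inner{\bar g_{t,i}}{\bar\phi_{t,i}}\bm{1}$, i.e.\ the $(i,j)$-th entry is $\bar g_{t,i,j} - \inner{\bar g_{t,i}}{\bar\phi_{t,i}}$. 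Now recall $\bar g_t = \bar p_t\otimes\bar l_t$, so $\bar g_{t,i,j} = \bar p_{t,i}\,\bar l_{t,j}$ and the centering term is $\inner{\bar g_{t,i}}{\bar\phi_{t,i}} = \bar p_{t,i}\inner{\bar l_t}{\bar\phi_{t,i}}$. Using $\bar l_t\in[-L,L]^{\bar d}$ and $\bar\phi_{t,i}\in\Delta(\bar d)$, both $\abs{\bar l_{t,j}}\le L$ and $\abs{\inner{\bar l_t}{\bar\phi_{t,i}}}\le L$, so $\abs{\bar g^+_{t,i,j}} \le \bar p_{t,i}(\abs{\bar l_{t,j}} + \abs{\inner{\bar l_t}{\bar\phi_{t,i}}}) \le 2L\,\bar p_{t,i}$. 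Chaining the two bounds gives $\abs{\bar g^\improper_{t,i,j}} \le 2L\,\bar p_{t,i}$, and summing over $i\in I^{(h)}$ yields the claimed $\abs{\inner{\bar g_t^\improper}{b}}\le 2L\sum_{i\in I^{(h)}}\bar p_{t,i}$.

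For the final assertion, $\inner{\bar g_t^\improper}{\bm{I}_{\bar d}} = \sum_{i\in[1:\bar d]}\bar g^\improper_{t,i,i}$, and the same entrywise bound gives $\abs{\bar g^\improper_{t,i,i}} \le 2L\,\bar p_{t,i}$, so $\abs{\inner{\bar g_t^\improper}{\bm{I}_{\bar d}}} \le 2L\sum_{i}\bar p_{t,i} = 2L$ since $\bar p_t\in\Delta(\bar d)$.

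I expect the only real subtlety — not an obstacle so much as the point where one must be careful — to be the bookkeeping of what the ``intermediate'' quantities $\bar g^+_t$ and $\bar\phi^+_t$ actually are and in which direction information flows (gradients flow Stage~2 $\to$ Stage~1, predictions flow Stage~1 $\to$ Stage~2), together with making sure the centering term $\inner{\bar g_{t,i}}{\bar\phi_{t,i}}$ is bounded using the \emph{proper} prediction $\bar\phi_t\in\calS(\bar d)$ rather than $\bar\phi_t^\improper$ or $\bar\phi_t^+$. The Haar structure enters only mildly here — just that $\|h\|_\infty = 1$ on its support — whereas the heavier use of the wavelet structure (e.g.\ Lemma~\ref{lemma:haar}) is reserved for bounding the comparator coefficients, not the gradient magnitudes.
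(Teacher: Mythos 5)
Your proof is correct and follows essentially the same route as the paper's: peel off the $\pm 1$ weights of $h$ to reduce to an entrywise $\ell_1$ bound, note Stage~1 only shrinks entries in magnitude, and then use the rank-one structure $\bar g_t = \bar p_t\otimes \bar l_t$ together with $\bar\phi_{t,i}\in\Delta(\bar d)$ to bound the Stage-2 centered entry by $2L\,\bar p_{t,i}$. Your explicit remark on using the proper prediction $\bar\phi_t$ (not $\bar\phi_t^\improper$ or $\bar\phi_t^+$) in the centering term matches the paper's intent exactly.
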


\begin{proof}[Proof of Lemma~\ref{lemma:gradient_norm}]
First, consider the general case of $b=h\otimes e^{(j)}$. 
\begin{equation*}
\abs{\inner{\bar g_t^\improper}{b}}=\abs{\inner{\bar g_t^\improper}{h\otimes e^{(j)}}}=\abs{\sum_{i\in I^{(h)}}\bar g^\improper_{t,i,j}h_{i}}
\leq \sum_{i\in I^{(h)}}\abs{\bar g^\improper_{t,i,j}h_{i}}=\sum_{i\in I^{(h)}}\abs{\bar g^\improper_{t,i,j}},
\end{equation*}
where the last equality is due to the entries of $h$ being $\pm 1$ on its support. From Stage 1 of the gradient processing oracle, we have $\abs{\bar g^\improper_{t,i,j}}$ being either $\abs{\bar g^+_{t,i,j}}$ or $0$. Therefore,
\begin{equation*}
\sum_{i\in I^{(h)}}\abs{\bar g^\improper_{t,i,j}}\leq \sum_{i\in I^{(h)}}\abs{\bar g^+_{t,i,j}}.
\end{equation*}
From Stage 2 of the gradient processing oracle, and using $\bar g_t=\bar p_t\otimes \bar l_t$,
\begin{equation*}
\sum_{i\in I^{(h)}}\abs{\bar g^+_{t,i,j}}=\sum_{i\in I^{(h)}}\abs{\bar g_{t,i,j}-\inner{\bar g_{t,i}}{\bar\phi_{t,i}}}=\sum_{i\in I^{(h)}}\bar p_{t,i}\abs{\bar l_{t,j}-\inner{\bar l_t}{\bar\phi_{t,i}}}\leq 2\sum_{i\in I^{(h)}}\bar p_{t,i}.
\end{equation*}

As for the case of $b=\bm{I}_{\bar d}$, 
\begin{multline*}
\abs{\inner{\bar g_t^\improper}{\bm{I}_{\bar d}}}=\abs{\sum_{i\in [1:\bar d]}\bar g^\improper_{t,i,i}}\leq \sum_{i\in [1:\bar d]}\abs{\bar g^\improper_{t,i,i}}\leq \sum_{i\in [1:\bar d]}\abs{\bar g^+_{t,i,i}}\\
=\sum_{i\in [1:\bar d]}\abs{\bar g_{t,i,i}-\inner{\bar g_{t,i}}{\bar\phi_{t,i}}}=\sum_{i\in [1:\bar d]}\bar p_{t,i}\abs{\bar l_{t,i}-\inner{\bar l_{t}}{\bar\phi_{t,i}}}\leq 2\sum_{i\in [1:\bar d]}\bar p_{t,i}=2.\qedhere
\end{multline*}
\end{proof}

\section{Proof of Main Results}\label{section:proof_main}

Our main result (Theorem~\ref{theorem:main}) is a combination of Theorem~\ref{theorem:external} and \ref{theorem:internal} proved below. 

\begin{theorem}\label{theorem:external}
For any comparing action modification rule $\phi^*\in\calS(d)$, let $\bar\phi^*\in\calS(\bar d)$ be the construction from Lemma~\ref{lemma:relabeling_uniform}. Then, there is an absolute constant $c>0$ such that for all $T\in\N_+$ and $\phi^*\in\calS(d)$, Algorithm~\ref{algorithm:main} guarantees
\begin{equation*}
\reg_T(\phi^*)\leq c\cdot\rpar{\sqrt{(T+d)\rpar{1+\sum_{i=1}^{\bar d-1}\bm{1}[\bar \phi^*_{i}\neq \bar \phi^*_{i+1}]}}\cdot \rpar{\log d}^{3/2}}.
\end{equation*}
\end{theorem}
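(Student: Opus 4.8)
The plan is to unfold the reduction chain of Section~\ref{section:ingredients}, write $\reg_T(\phi^*)$ as a sum over the matrix features $b\in\calB$ of one-dimensional regrets, invoke Lemma~\ref{lemma:one_d_alg} for each coordinate algorithm $\calA_{\oned}^{(b)}$, and aggregate the resulting estimates using the structural lemmas of Section~\ref{section:detail_ingredients}. First I would use Lemma~\ref{lemma:relabeling_uniform} to rewrite $\reg_T(\phi^*)=\sum_{t=1}^T\inner{\bar g_t}{\bar\phi_t-\bar\phi^*}$ with $\bar g_t=\bar p_t\otimes\bar l_t$ (combining the loss identity $\inner{p_t}{l_t}=\inner{\bar p_t}{\bar l_t}$, the comparator identity of the lemma, and the fixed-point property $\bar p_t=\bar\phi_t(\bar p_t)$), then apply the projection/gradient-processing wrapper (Lemma~\ref{lemma:regret_wrapper}) to pass to the improper iterates, $\sum_t\inner{\bar g_t}{\bar\phi_t-\bar\phi^*}\le\sum_t\inner{\bar g_t^\improper}{\bar\phi_t^\improper-\bar\phi^*}$, and finally expand along $\calB$. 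Using $\bar\phi_t^\improper=\sum_{b\in\calB}\Phi_t^{(b)}b$ from Eq.(\ref{eq:improper}) and the Representation~1 decomposition $\bar\phi^*=\sum_{b\in\calB}\Phi^{*,(b)}b$ (valid because $\calB\setminus\{\bm{I}_{\bar d}\}$ is an orthogonal basis of $\R^{\bar d\times\bar d}$), the improper regret equals $\sum_{b\in\calB}\sum_t\inner{\bar g_t^\improper}{b}(\Phi_t^{(b)}-\Phi^{*,(b)})$, and each inner sum is exactly the regret of $\calA_{\oned}^{(b)}$ against comparator $\Phi^{*,(b)}$ with gradients $c_t=\inner{\bar g_t^\improper}{b}$. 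Since $\abs{c_t}\le 2L$ for every $b$ by Lemma~\ref{lemma:gradient_norm}, the choice $G=2L$ is valid and Lemma~\ref{lemma:one_d_alg} applies termwise.

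Next I would substitute the quantitative estimates. For a wavelet feature $b=h^{(s,l)}\otimes e^{(j)}$ with $s\in[1:S]$, Lemma~\ref{lemma:gradient_norm} gives $\sum_t\abs{c_t}\le 2L\,V_{s,l}$ with $V_{s,l}:=\sum_t\sum_{i\in I^{(s,l)}}\bar p_{t,i}$, so the leading factor satisfies $\sqrt{G^2+G\sum_t\abs{c_t}}\le 2L\sqrt{1+V_{s,l}}$, independently of the column $j$. The comparator-dependent bracket in Lemma~\ref{lemma:one_d_alg} splits into a noise-floor part $\propto\eps_b$ and a main part $\propto\abs{\Phi^{*,(s,l,j)}}$; using $\abs{\Phi^{*,(s,l,j)}}\le\sum_{j'}\abs{\Phi^{*,(s,l,j')}}\le 1$ (Lemma~\ref{lemma:coefficient_magnitude}) together with $\eps_b\ge(2d^2)^{-1}$, the logarithm $\log(1+\abs{\Phi^{*,(s,l,j)}}/(\sqrt2\eps_b))$ is $O(\log d)$, so the main part of each term is at most $O(L\sqrt{\log d})\cdot\sqrt{1+V_{s,l}}\cdot\abs{\Phi^{*,(s,l,j)}}$. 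Summing over $j$ and applying the sparsity lemma: if the rows $\bar\phi^*_i$, $i\in I^{(s,l)}$, all coincide then Lemma~\ref{lemma:haar} forces $\sum_j\abs{\Phi^{*,(s,l,j)}}=0$, hence $\sum_j\abs{\Phi^{*,(s,l,j)}}\le\bm{1}[\mathrm{bad}(s,l)]$, where $\mathrm{bad}(s,l)$ means $I^{(s,l)}$ contains two distinct rows of $\bar\phi^*$. The combinatorial observation is that for each scale $s$ the number of bad blocks $N_s:=\sum_l\bm{1}[\mathrm{bad}(s,l)]$ is at most $R:=\sum_{i=1}^{\bar d-1}\bm{1}[\bar\phi^*_i\neq\bar\phi^*_{i+1}]$, since a bad block must contain a switch index $i$ with $i,i+1$ inside it, and distinct blocks at a fixed scale are disjoint.

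I would then collect all the contributions. The main part is at most $O(L\sqrt{\log d})\sum_{s=1}^S\sum_l\bm{1}[\mathrm{bad}(s,l)]\sqrt{1+V_{s,l}}$; by Cauchy--Schwarz over $l$ and the identity $\sum_l V_{s,l}=\sum_t\sum_{i=1}^{\bar d}\bar p_{t,i}=T$ (the supports $I^{(s,l)}$ partition $[1:\bar d]$ at each scale), this is $\le O(L\sqrt{\log d})\sum_{s=1}^S\sqrt{N_s}\sqrt{N_s+T}\le O(L\sqrt{\log d})\cdot S\cdot\sqrt{R}\sqrt{R+T}$, which using $S=\lceil\log_2 d\rceil$ and $R<\bar d\le 2d$ becomes $O\big(L(\log d)^{3/2}\sqrt{(1+R)(T+d)}\big)$. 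The noise-floor part of the wavelet features is $\sum_{s,l,j}2\eps_b\cdot 2L\sqrt{1+V_{s,l}}$; since $\sum_j\eps_b=d^{-2}\sum_j\abs{\calI(\calI^{-1}(j))}^{-1}=d^{-1}$ (disjointness of $\calI$ makes this $\calI$-independent), it collapses to $\frac{4L}{d}\sum_{s,l}\sqrt{1+V_{s,l}}=O\big(\frac{L}{d}(\bar d+\sqrt{\bar d T})\big)=O\big(L\sqrt{(T+d)/d}\big)$, which is negligible. Finally I would handle the two features excluded from the wavelet sum: $\bm{I}_{\bar d}$, for which $\Phi^{*,(\bm{I}_{\bar d})}=0$ in Representation~1 so only its $O(L\sqrt{1+T})$ noise floor survives; and the $\bar d$ features $h^{(S,0)}\otimes e^{(j)}$, whose gradients are only bounded by $2L$ (their support is all of $[1:\bar d]$), so they contribute, via $\sum_j\abs{\Phi^{*,(S,0,j)}}\le 1$ and $\sum_j\eps_b=\bar d/d\le 2$, at most $O(L\sqrt{T\log d})$. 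Summing everything gives the claimed $O\big(L(\log d)^{3/2}\sqrt{(T+d)(1+R)}\big)$ bound.

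The main obstacle is the step $\sum_l\bm{1}[\mathrm{bad}(s,l)]\sqrt{1+V_{s,l}}\le\sqrt{N_s}\sqrt{N_s+T}$ with $N_s\le R$, which is precisely where the Haar construction must pay off: the first-order gradient adaptivity in Lemma~\ref{lemma:one_d_alg} (the $\sum_t\abs{c_t}$ rather than $G^2T$) is what produces the factor $\sqrt{1+V_{s,l}}$ rather than $\sqrt{1+T}$, the partition structure of Haar supports at a fixed scale is what gives $\sum_l V_{s,l}=T$ rather than $\bar d\,T$, and the sparsity lemma (Lemma~\ref{lemma:haar}) is what caps the number of active blocks by the row-variation $R$; dropping any one of the three degrades the bound to the non-adaptive $\tilde O(\sqrt{dT})$ or worse. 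A secondary technical point to be careful about is the calibration of the $\eps_b$ hyperparameters, which must scale like $(d^2\abs{\calI(\calI^{-1}(j))})^{-1}$ so that the aggregate noise floor $\sum_b\eps_b\cdot\sqrt{G^2+G\sum_t\abs{c_t}}$ stays below the target rate while the logarithmic penalties remain $O(\log d)$.
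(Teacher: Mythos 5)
Your proposal is correct and follows essentially the same route as the paper: reduce to per-feature one-dimensional regrets via Lemmas~\ref{lemma:relabeling_uniform}, \ref{lemma:regret_wrapper} and Eq.(\ref{eq:improper}), invoke Lemma~\ref{lemma:one_d_alg} with $G=2L$ justified by Lemma~\ref{lemma:gradient_norm}, use Lemmas~\ref{lemma:coefficient_magnitude} and~\ref{lemma:haar} to cap the active blocks per scale by the row-switch count, and close with Cauchy--Schwarz over locations and a sum over the $S=O(\log d)$ scales. The only deviations are presentational (you track the noise floor and the two special feature families in a slightly different order, and your per-scale Cauchy--Schwarz produces $\sqrt{N_s}\sqrt{N_s+T}$ rather than the paper's $\sqrt{R}\sqrt{2d+T}$, both of which land on the same final rate).
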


\begin{proof}[Proof of Theorem~\ref{theorem:external}]
We structure the proof into the following three steps. 

\paragraph{Step 1} Putting together the guarantees of individual components, arriving at the combined regret bound, Eq.(\ref{eq:big_regret}). 

Due to Lemma~\ref{lemma:relabeling_uniform}, $\inner{p_t-\phi^*(p_t)}{l_t}=\inner{\bar p_t-\bar \phi^*(\bar p_t)}{\bar l_t}$, therefore regarding our objective we have
\begin{equation}\label{eq:regret_first_step}
\reg_T(\phi^*)=\sum_{t=1}^T\inner{\bar p_t-\bar \phi^*(\bar p_t)}{\bar l_t}. 
\end{equation}

Next, for all $b\in\calB$ we consider $\Phi^{*,(b)}\in\R$ defined according to Representation 1 in Appendix~\ref{subsection:detail_matrix_feature}: if $b=\bm{I}_{\bar d}$, then $\Phi^{*,(b)}=0$; for all $b\in\calB\backslash\{\bm{I_{\bar d}}\}$, $\Phi^{*,(b)}=\frac{\inner{\bar\phi^*}{b}}{\inner{b}{b}}$. This ensures $\bar\phi^*=\sum_{b\in\calB}\Phi^{*,(b)}b$. In addition, since all $b\in\calB\backslash\{\bm{I}_{\bar d}\}$ can be expressed as $b=h\otimes e^{(j)}$ for some $h\in\calH$ and $e^{(j)}\in\calE$, we will write their corresponding $\Phi^{*,(b)}$ equivalently as $\Phi^{*,(h,j)}$. With any fixed $h$, $\sum_{j=1}^{\bar d}\abs{\Phi^{*,(h,j)}}\leq 1$ due to Lemma~\ref{lemma:coefficient_magnitude}. 

Now consider an arbitrary $b\in\calB$. Due to Lemma~\ref{lemma:gradient_norm}, the one-dimensional gradients that Algorithm~\ref{algorithm:main} sends to $\calA_\oned^{(b)}$ satisfy $\abs{\inner{\bar g_t^\improper}{b}}\leq 2$, which means Lemma~\ref{lemma:one_d_alg} can be applied with $G=2$. This yields the one-dimensional regret bound with respect to $\Phi^{*,(b)}$,
\begin{multline*}
\sum_{t=1}^T\inner{\bar g_t^\improper}{b}(\Phi_t^{(b)}-\Phi^{*,(b)})\\
\leq \sqrt{8+4\sum_{t=1}^T\abs{\inner{\bar g_t^\improper}{b}}}\spar{\sqrt{2}\eps_b+2\abs{\Phi^{*,(b)}}\sqrt{\log\rpar{1+\frac{\abs{\Phi^{*,(b)}}}{\sqrt{2}\eps_b}}}+4\abs{\Phi^{*,(b)}}}.
\end{multline*}
Taking a summation over $b\in\calB$, the LHS becomes
\begin{align*}
\sum_{b\in\calB}\sum_{t=1}^T\inner{\bar g_t^\improper}{b}(\Phi_t^{(b)}-\Phi^{*,(b)})&=\sum_{t=1}^T\inner{\bar g_t^\improper}{\sum_{b\in\calB}(\Phi_t^{(b)}-\Phi^{*,(b)})b}\\
&=\sum_{t=1}^T\inner{\bar g_t^\improper}{\bar\phi_t^{\mathrm{improper}}-\bar\phi^*}\\
&\geq \sum_{t=1}^T\inner{\bar g_t}{\bar \phi_t-\bar \phi^*}\tag{Lemma~\ref{lemma:regret_wrapper}}\\
&=\sum_{t=1}^T\inner{\bar p_t-\bar\phi^*(\bar p_t)}{\bar l_t}\tag{$\phi$-to-external reduction}\\
&=\reg_T(\phi^*).\tag{Eq.(\ref{eq:regret_first_step})}
\end{align*}
Therefore, 
\begin{align}
\reg_T(\phi^*)\leq~& \sum_{b\in\calB}\sqrt{8+4\sum_{t=1}^T\abs{\inner{\bar g_t^\improper}{b}}}\spar{\sqrt{2}\eps_b+2\abs{\Phi^{*,(b)}}\sqrt{\log\rpar{1+\frac{\abs{\Phi^{*,(b)}}}{\sqrt{2}\eps_b}}}+4\abs{\Phi^{*,(b)}}}\nonumber\\
=~&\sqrt{16+8\sum_{t=1}^T\abs{\inner{\bar g_t^\improper}{\bm{I}_{\bar d}}}}\nonumber\\
&\quad+\sum_{j=1}^{\bar d}\sqrt{8+4\sum_{t=1}^T\abs{\inner{\bar g_t^\improper}{h^{(S,0)}\otimes e^{(j)}}}}\Bigg[\frac{\sqrt{2}\bar d}{d^2\abs{\calI(\calI^{-1}(j))}}\nonumber\\
&\quad\quad +2\abs{\Phi^{*,(h^{(S,0)},j)}}\sqrt{\log\rpar{1+\frac{\abs{\Phi^{*,(h^{(S,0)},j)}}}{\sqrt{2}\bar d(d^2\abs{\calI(\calI^{-1}(j))})^{-1}}}}+4\abs{\Phi^{*,(h^{(S,0)},j)}}\Bigg]\nonumber\\
&\quad\quad\quad+\sum_{h\in\calH\backslash\{h^{(S,0)}\}}\sum_{j=1}^{\bar d}\sqrt{8+4\sum_{t=1}^T\abs{\inner{\bar g_t^\improper}{h\otimes e^{(j)}}}}\Bigg[\frac{\sqrt{2}}{d^2\abs{\calI(\calI^{-1}(j))}}\nonumber\\
&\quad\quad\quad\quad +2\abs{\Phi^{*,(h,j)}}\sqrt{\log\rpar{1+\frac{\abs{\Phi^{*,(h,j)}}}{\sqrt{2}(d^2\abs{\calI(\calI^{-1}(j))})^{-1}}}}+4\abs{\Phi^{*,(h,j)}}\Bigg].\label{eq:big_regret}
\end{align}

\paragraph{Step 2} Separately bounding the three parts in Eq.(\ref{eq:big_regret}), corresponding to three different types of $b$. 

We now separately bound the three terms on the RHS of Eq.(\ref{eq:big_regret}). Denote them as $\mathrm{PartOne}$ (the first line), $\mathrm{PartTwo}$ (the second and third line), and $\mathrm{PartThree}$ (the last two lines). 
\begin{itemize}
\item Due to Lemma~\ref{lemma:gradient_norm}, $\abs{\inner{\bar g_t^\improper}{\bm{I}_{\bar d}}}\leq 2$, therefore
\begin{equation*}
\mathrm{PartOne}=\sqrt{16+8\sum_{t=1}^T\abs{\inner{\bar g_t^\improper}{\bm{I}_{\bar d}}}}\leq 4\sqrt{T+1}.
\end{equation*}
\item Due to Lemma~\ref{lemma:gradient_norm}, $\abs{\inner{\bar g_t^\improper}{h^{(S,0)}\otimes e^{(j)}}}\leq 2\sum_{i=1}^{\bar d}\bar p_{t,i}=2$, therefore,
\begin{align}
\mathrm{PartTwo}
&\leq 2\sqrt{2}\sqrt{T+1}\sum_{j=1}^{\bar d}\Biggl[\frac{\sqrt{2}\bar d}{d^2\abs{\calI(\calI^{-1}(j))}}\nonumber\\
&\quad+2\abs{\Phi^{*,(h^{(S,0)},j)}}\sqrt{\log\rpar{1+\frac{\abs{\Phi^{*,(h^{(S,0)},j)}}}{\sqrt{2}\bar d(d^2\abs{\calI(\calI^{-1}(j))})^{-1}}}}+4\abs{\Phi^{*,(h^{(S,0)},j)}}\Biggr]\nonumber\\
&= 2\sqrt{2}\sqrt{T+1}\Bigg[\sqrt{2}\bar d d^{-1}\nonumber\\
&\quad+2\sum_{j=1}^{\bar d}\abs{\Phi^{*,(h^{(S,0)},j)}}\sqrt{\log\rpar{1+\frac{\abs{\Phi^{*,(h^{(S,0)},j)}}}{\sqrt{2}\bar d(d^2\abs{\calI(\calI^{-1}(j))})^{-1}}}}+4\sum_{j=1}^{\bar d}\abs{\Phi^{*,(h^{(S,0)},j)}}\Bigg]\nonumber\\
&\leq 2\sqrt{2}\sqrt{T+1}\spar{2\sqrt{2}+4+2\sum_{j=1}^{\bar d}\abs{\Phi^{*,(h^{(S,0)},j)}}\sqrt{\log\rpar{1+\frac{\abs{\Phi^{*,(h^{(S,0)},j)}}}{\sqrt{2}\bar d(d^2\abs{\calI(\calI^{-1}(j))})^{-1}}}}}.\label{eq:termtwo}
\end{align}
For now we use a crude bound: $d\leq \bar d$, $\abs{\calI(\calI^{-1}(j))}\leq 2$ and $\abs{\Phi^{*,(h^{(S,0)},j)}}\leq 1$, therefore
\begin{align*}
\mathrm{PartTwo}&\leq 2\sqrt{2}\sqrt{T+1}\spar{2\sqrt{2}+4+2\sum_{j=1}^{\bar d}\abs{\Phi^{*,(h^{(S,0)},j)}}\sqrt{\log\rpar{1+\sqrt{2}d}}}\\
&\leq 2\sqrt{2}\sqrt{T+1}\spar{2\sqrt{2}+4+2\sqrt{\log\rpar{1+\sqrt{2}d}}}\\
&=O(\sqrt{T\log d}). 
\end{align*}
Later on we will return to Eq.(\ref{eq:termtwo}) when analyzing the quantile regret. 
\item Due to Lemma~\ref{lemma:gradient_norm}, $\abs{\inner{\bar g_t^\improper}{h\otimes e^{(j)}}}\leq 2\sum_{i\in I^{(h)}}\bar p_{t,i}$ for all $h\in\calH$. We also have $\abs{\calI(\calI^{-1}(j))}\leq 2$ and $\abs{\Phi^{*,(h,j)}}\leq 1$. Therefore similar to the above, 
\begin{align*}
&\mathrm{PartThree}\\
&\leq 2\sqrt{2}\sum_{h\in\calH\backslash\{h^{(S,0)}\}}\sum_{j=1}^{\bar d}\sqrt{1+\sum_{t=1}^T\sum_{i\in I^{(h)}}\bar p_{t,i}}\spar{\frac{\sqrt{2}}{d^2\abs{\calI(\calI^{-1}(j))}}+2\abs{\Phi^{*,(h,j)}}\rpar{\sqrt{\log\rpar{1+\sqrt{2}d^2}}+2}}\\
&= 2\sqrt{2}\sum_{h\in\calH\backslash\{h^{(S,0)}\}}\sqrt{1+\sum_{t=1}^T\sum_{i\in I^{(h)}}\bar p_{t,i}}\spar{\frac{\sqrt{2}}{d}+2\sum_{j=1}^{\bar d}\abs{\Phi^{*,(h,j)}}\rpar{\sqrt{\log\rpar{1+\sqrt{2}d^2}}+2}}\\
&\leq 4\sqrt{T+1}+4\sqrt{2}\rpar{\sqrt{\log\rpar{1+\sqrt{2}d^2}}+2}\underbrace{\sum_{h\in\calH\backslash\{h^{(S,0)}\}}\rpar{\sum_{j=1}^{\bar d}\abs{\Phi^{*,(h,j)}}}\sqrt{1+\sum_{t=1}^T\sum_{i\in I^{(h)}}\bar p_{t,i}}}_{\eqdef\Diamond}.
\end{align*}
The key step of our analysis is bounding this $\Diamond$ term. 
\end{itemize}

\paragraph{Step 3} Exploiting sparsity. 

To this end, notice that for all $h\in\calH\backslash\{h^{(S,0)}\}$, we can write it as $h^{(s,l)}$ for some scale parameter $s\in[1:S]$ and some location parameter $l\in[1:2^{S-s}]$. Accordingly, we will write $\Phi^{*,(h,j)}$ equivalently as $\Phi^{*,(s,l,j)}$. Then, 
\begin{align*}
\Diamond&=\sum_{s=1}^S\sum_{l=1}^{2^{S-s}}\rpar{\sum_{j=1}^{\bar d}\abs{\Phi^{*,(s,l,j)}}}\sqrt{1+\sum_{t=1}^T\sum_{i\in I^{(s,l)}}\bar p_{t,i}}\\
&=\sum_{s=1}^S\sum_{l=1}^{2^{S-s}}\rpar{\sum_{j=1}^{\bar d}\frac{\abs{\inner{\bar\phi^*}{h^{(s,l)}\otimes e^{(j)}}}}{\inner{h^{(s,l)}\otimes e^{(j)}}{h^{(s,l)}\otimes e^{(j)}}}}\sqrt{1+\sum_{t=1}^T\sum_{i\in I^{(s,l)}}\bar p_{t,i}}\\
&=\sum_{s=1}^S\sum_{l=1}^{2^{S-s}}\rpar{\frac{1}{\abs{I^{(s,l)}}}\sum_{j=1}^{\bar d}\abs{\inner{\bar\phi^*}{h^{(s,l)}\otimes e^{(j)}}}}\sqrt{1+\sum_{t=1}^T\sum_{i\in I^{(s,l)}}\bar p_{t,i}}.
\end{align*}

Now consider fixing $s$ and letting $l$ vary. Due to Lemma~\ref{lemma:haar}, if $\bar\phi^*$ remains the same for all its rows with indices in $I^{(s,l)}$, then $\sum_{j=1}^{\bar d}\abs{\inner{\bar\phi^*}{h^{(s,l)}\otimes e^{(j)}}}=0$. Since the number of row-changes is $\sum_{i=1}^{\bar d-1}\bm{1}[\bar \phi^*_{i}\neq \bar \phi^*_{i+1}]$, there are at most this amount of $l$ such that $\sum_{j=1}^{\bar d}\abs{\inner{\bar\phi^*}{h^{(s,l)}\otimes e^{(j)}}}\neq 0$. Therefore using Cauchy-Schwarz, 
\begin{align*}
&\sum_{l=1}^{2^{S-s}}\rpar{\frac{1}{\abs{I^{(s,l)}}}\sum_{j=1}^{\bar d}\abs{\inner{\bar\phi^*}{h^{(s,l)}\otimes e^{(j)}}}}\sqrt{1+\sum_{t=1}^T\sum_{i\in I^{(s,l)}}\bar p_{t,i}}\\
=~&\sum_{l;l\in[1:2^{S-s}],\sum_{j=1}^{\bar d}\abs{\inner{\bar\phi^*}{h^{(s,l)}\otimes e^{(j)}}}\neq 0}\rpar{\frac{1}{\abs{I^{(s,l)}}}\sum_{j=1}^{\bar d}\abs{\inner{\bar\phi^*}{h^{(s,l)}\otimes e^{(j)}}}}\sqrt{1+\sum_{t=1}^T\sum_{i\in I^{(s,l)}}\bar p_{t,i}}\\
\leq~&\sqrt{\abs{\left\{l;l\in[1:2^{S-s}],\sum_{j=1}^{\bar d}\abs{\inner{\bar\phi^*}{h^{(s,l)}\otimes e^{(j)}}}\neq 0\right\}}}\\
&\quad\cdot\sqrt{\sum_{\substack{l:l\in[1:2^{S-s}],\\\sum_{j=1}^{\bar d}\abs{\inner{\bar\phi^*}{h^{(s,l)}\otimes e^{(j)}}}\neq 0}}\rpar{\frac{1}{\abs{I^{(s,l)}}}\sum_{j=1}^{\bar d}\abs{\inner{\bar\phi^*}{h^{(s,l)}\otimes e^{(j)}}}}^2\rpar{1+\sum_{t=1}^T\sum_{i\in I^{(s,l)}}\bar p_{t,i}}}\\
\leq~& \sqrt{\sum_{i=1}^{\bar d-1}\bm{1}[\bar \phi^*_{i}\neq \bar \phi^*_{i+1}]}\sqrt{\sum_{l=1}^{2^{S-s}}\rpar{\frac{1}{\abs{I^{(s,l)}}}\sum_{j=1}^{\bar d}\abs{\inner{\bar\phi^*}{h^{(s,l)}\otimes e^{(j)}}}}^2\rpar{1+\sum_{t=1}^T\sum_{i\in I^{(s,l)}}\bar p_{t,i}}}\\
\leq~& \sqrt{\sum_{i=1}^{\bar d-1}\bm{1}[\bar \phi^*_{i}\neq \bar \phi^*_{i+1}]}\sqrt{\sum_{l=1}^{2^{S-s}}\rpar{1+\sum_{t=1}^T\sum_{i\in I^{(s,l)}}\bar p_{t,i}}}\tag{Lemma~\ref{lemma:coefficient_magnitude}}\\
\leq~& \sqrt{\sum_{i=1}^{\bar d-1}\bm{1}[\bar \phi^*_{i}\neq \bar \phi^*_{i+1}]}\sqrt{\bar d+\sum_{t=1}^T\sum_{l=1}^{2^{S-s}}\sum_{i\in I^{(s,l)}}\bar p_{t,i}}\\
\leq~& \sqrt{\sum_{i=1}^{\bar d-1}\bm{1}[\bar \phi^*_{i}\neq \bar \phi^*_{i+1}]}\sqrt{2d+T}.
\end{align*}
Since this holds for all $s\in[1:S]$ and $S=O(\log d)$, overall we have
\begin{equation*}
\Diamond=O\rpar{\sqrt{(T+d)\sum_{i=1}^{\bar d-1}\bm{1}[\bar \phi^*_{i}\neq \bar \phi^*_{i+1}]}\cdot \log d}.
\end{equation*}
Combining everything above completes the proof.
\end{proof}

\begin{theorem}\label{theorem:internal}
There is an absolute constant $c>0$ such that for all $T\in\N_+$ and $\phi^*\in\calS(d)$, Algorithm~\ref{algorithm:main} guarantees
\begin{equation*}
\reg_T(\phi^*)\leq c\cdot\rpar{\sqrt{(d-d^\self_{\phi^*})(T+d)}\cdot(\log d)^{3/2}}.
\end{equation*}
\end{theorem}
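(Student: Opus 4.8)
The plan is to re-run the proof of Theorem~\ref{theorem:external} almost verbatim, changing only the two ingredients that encode the comparator's complexity. First I would replace Lemma~\ref{lemma:relabeling_uniform} by Lemma~\ref{lemma:relabeling_self}, taking $\bar\phi^*\in\calS(\bar d)$ to be the latter's output, so that $\reg_T(\phi^*)=\sum_{t=1}^T\inner{\bar p_t-\bar\phi^*(\bar p_t)}{\bar l_t}$ and $\bar d-d^\self_{\bar\phi^*}\le 2(d-d^\self_{\phi^*})$. Second, I would use Representation~2 in Appendix~\ref{subsection:detail_matrix_feature} in place of Representation~1: set $\Phi^{*,(\bm{I}_{\bar d})}=1$ and $\Phi^{*,(h,j)}=\inner{\bar\phi^*-\bm{I}_{\bar d}}{h\otimes e^{(j)}}/\inner{h\otimes e^{(j)}}{h\otimes e^{(j)}}$, which still gives $\bar\phi^*=\sum_{b\in\calB}\Phi^{*,(b)}b$, and note that the second inequality of Lemma~\ref{lemma:coefficient_magnitude} yields $\sum_{j=1}^{\bar d}\abs{\Phi^{*,(h,j)}}\le 2$ and, by the same entrywise argument, $\abs{\Phi^{*,(h,j)}}\le 1$. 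With these two substitutions, Step~1 of the proof of Theorem~\ref{theorem:external} carries over unchanged --- apply Lemma~\ref{lemma:one_d_alg} with $G=2L$ (legitimate by Lemma~\ref{lemma:gradient_norm}) to each $\calA_\oned^{(b)}$, sum over $b\in\calB$, and pass through Lemma~\ref{lemma:regret_wrapper} and the $\phi$-to-external reduction --- producing the analogue of Eq.(\ref{eq:big_regret}).

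Next I would split that bound into $\mathrm{PartOne}$ ($b=\bm{I}_{\bar d}$), $\mathrm{PartTwo}$ ($b=h^{(S,0)}\otimes e^{(j)}$) and $\mathrm{PartThree}$ ($b=h^{(s,l)}\otimes e^{(j)}$ with $s\ge 1$). The first two are $\tilde O(L\sqrt T)$ just as before: by Lemma~\ref{lemma:gradient_norm}, $\abs{\inner{\bar g_t^\improper}{\bm{I}_{\bar d}}}\le 2L$, and since $\eps_{\bm{I}_{\bar d}}=1$ the bracket in Lemma~\ref{lemma:one_d_alg} is an absolute constant, so $\mathrm{PartOne}=O(L\sqrt T)$; and the crude bounds used for $\mathrm{PartTwo}$ in Theorem~\ref{theorem:external} ($h^{(S,0)}$ has full support, $\abs{\calI(\calI^{-1}(j))}\le 2$, $\abs{\Phi^{*,(h^{(S,0)},j)}}\le 1$) still apply, giving $\mathrm{PartTwo}=O(L\sqrt{T\log d})$. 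The heart of the argument is the $\Diamond$-type quantity inside $\mathrm{PartThree}$, namely $\sum_{h\in\calH\backslash\{h^{(S,0)}\}}\rpar{\sum_{j=1}^{\bar d}\abs{\Phi^{*,(h,j)}}}\sqrt{1+\sum_{t=1}^T\sum_{i\in I^{(h)}}\bar p_{t,i}}$. Here I would apply Lemma~\ref{lemma:haar} to the matrix $\bar\phi^*-\bm{I}_{\bar d}$ rather than to $\bar\phi^*$: writing $h=h^{(s,l)}$, if all rows of $\bar\phi^*-\bm{I}_{\bar d}$ indexed by $I^{(s,l)}$ coincide then $\Phi^{*,(s,l,j)}=0$ for all $j$, so for each fixed $s$ the number of surviving location parameters $l$ is at most $\mathrm{RowSwitch}(\bar\phi^*-\bm{I}_{\bar d})\defeq\sum_{i=1}^{\bar d-1}\bm{1}[(\bar\phi^*-\bm{I}_{\bar d})_i\neq(\bar\phi^*-\bm{I}_{\bar d})_{i+1}]$. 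Running the same Cauchy--Schwarz chain as in Step~3 of Theorem~\ref{theorem:external} (using $\sum_j\abs{\Phi^{*,(h,j)}}\le 2$ from Lemma~\ref{lemma:coefficient_magnitude} and $\bar p_t\in\Delta(\bar d)$) then yields $\mathrm{PartThree}=O\rpar{L\sqrt{(T+d)\,\mathrm{RowSwitch}(\bar\phi^*-\bm{I}_{\bar d})}\,(\log d)^{3/2}}$.

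The last step is to bound $\mathrm{RowSwitch}(\bar\phi^*-\bm{I}_{\bar d})$ in terms of $d^\self$. The key observation is that $\bar\phi^*_i=e^{(i)}$ makes the $i$-th row of $\bar\phi^*-\bm{I}_{\bar d}$ identically zero, so a switch at position $i$ forces $i$ or $i+1$ to belong to $\{i:\bar\phi^*_i\neq e^{(i)}\}$, a set of cardinality $\bar d-d^\self_{\bar\phi^*}$ each of whose elements can be charged at most twice; hence $\mathrm{RowSwitch}(\bar\phi^*-\bm{I}_{\bar d})\le 2(\bar d-d^\self_{\bar\phi^*})\le 4(d-d^\self_{\phi^*})$ by Lemma~\ref{lemma:relabeling_self}. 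Summing $\mathrm{PartOne}+\mathrm{PartTwo}+\mathrm{PartThree}$ and absorbing the $\tilde O(L\sqrt T)$ overhead into the main term when $d-d^\self_{\phi^*}\ge 1$ (the case $d^\self_{\phi^*}=d$, i.e.\ $\phi^*=\bm{I}_d$, being trivial since then $\reg_T(\phi^*)=0$) gives the claimed bound. I expect the only genuinely new content --- and hence the main thing to get right --- to be identifying Representation~2 as the correct linear representation and verifying that Lemma~\ref{lemma:haar} applied to $\bar\phi^*-\bm{I}_{\bar d}$ converts its row-sparsity into a $d^\self_{\phi^*}$ bound; everything else is a mechanical re-run of the proof of Theorem~\ref{theorem:external}, so I do not anticipate a substantive obstacle.
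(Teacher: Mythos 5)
Your proposal is correct and follows essentially the same route as the paper's own proof of Theorem~\ref{theorem:internal}: replace Lemma~\ref{lemma:relabeling_uniform} with Lemma~\ref{lemma:relabeling_self}, switch to Representation~2 for the coefficients $\Phi^{*,(b)}$, re-run the decomposition into $\mathrm{PartOne}$, $\mathrm{PartTwo}$, $\mathrm{PartThree}$, apply Lemma~\ref{lemma:haar} to $\bar\phi^* - \bm{I}_{\bar d}$ to bound $\Diamond$ by $\mathrm{RowSwitch}(\bar\phi^* - \bm{I}_{\bar d}) = O(d - d^\self_{\phi^*})$, and treat the edge case $\phi^* = \bm{I}_d$ separately. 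The only differences are expository (e.g., your explicit charging argument for the row-switch bound, which the paper states more tersely).
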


\begin{proof}[Proof of Theorem~\ref{theorem:internal}]
We start by following the same proof as Theorem~\ref{theorem:external}, with only two differences: 
\begin{itemize}
\item For any $\phi^*\in\calS(d)$, we let $\bar\phi^*\in\calS(\bar d)$ be the construction from Lemma~\ref{lemma:relabeling_self}.
\item $\Phi^{*,(b)}$ is defined according to Representation 2 in Appendix~\ref{subsection:detail_matrix_feature}. That is, if $b=\bm{I}_{\bar d}$, then $\Phi^{*,(b)}=1$; for all $b\in\calB\backslash\{\bm{I_{\bar d}}\}$, $\Phi^{*,(b)}=\frac{\inner{\bar\phi^*-\bm{I}_{\bar d}}{b}}{\inner{b}{b}}$.
\end{itemize}
Then, analogous to Eq.(\ref{eq:big_regret}), we take a summation of the one-dimensional regret over all matrix features $b\in\calB$ and obtain 
\begin{equation*}
\reg_T(\phi^*)\leq \mathrm{PartOne}+\mathrm{PartTwo}+\mathrm{PartThree},
\end{equation*}
where each part on the RHS represents the summation over a certain subset of $b\in\calB$. 

Specifically, only including $b=\bm{I}_{\bar d}$ we have $\mathrm{PartOne}=O(\sqrt{T})$. For the summation over all $b=h^{(S,0)}\otimes e^{(j)}$; $\forall j$, we have $\mathrm{PartTwo}=O(\sqrt{T\log d})$. Finally, for the summation over the rest of the $b\in\calB$, we have
\begin{align*}
\mathrm{PartThree}\leq O(\sqrt{T})+O(\sqrt{\log d})\underbrace{\sum_{h\in\calH\backslash\{h^{(S,0)}\}}\rpar{\sum_{j=1}^{\bar d}\abs{\Phi^{*,(h,j)}}}\sqrt{1+\sum_{t=1}^T\sum_{i\in I^{(h)}}\bar p_{t,i}}}_{\eqdef\Diamond}.
\end{align*}
Everything here is only different from the corresponding term in the proof of Theorem~\ref{theorem:external} by constant multiplying factors. 

The analysis of $\Diamond$ is also similar to Step 3 in the proof of Theorem~\ref{theorem:external}, with $\bar\phi^*$ replaced by $\bar\phi^*-\bm{I}_{\bar d}$ (since in this proof when $b=\bm{I}_{\bar d}$ we define $\Phi^{*,(b)}=1$, rather than $\Phi^{*,(b)}=0$ as in the proof of Theorem~\ref{theorem:external}). By the definition of $d^\self_{\bar\phi^*}$, only $\bar d-d^\self_{\bar\phi^*}$ rows of $\bar\phi^*-\bm{I}_{\bar d}$ are nonzero. Accordingly, there are at most $2(\bar d-d^\self_{\bar\phi^*})$ row-changes, therefore
\begin{equation*}
\Diamond=O\rpar{\sqrt{\rpar{\bar d-d^\self_{\bar\phi^*}}(T+d)}\cdot \log d}.
\end{equation*}
Due to Lemma~\ref{lemma:relabeling_self}, $\bar d-d^\self_{\bar\phi^*}$ here can be further replaced by $d-d^\self_{\phi^*}$. 

Combining the above proves the theorem in the case of $d^\self_{\bar\phi^*}\leq d-1$. Finally observe that $\reg_T(\phi^*)$ trivially equals $0$ when $d^\self_{\bar\phi^*}=d$ (i.e., $\phi^*=\bm{I}_d$), therefore the regret bound in the theorem holds in that case as well. 
\end{proof}

Next, we restate and prove the quantile regret bound of Algorithm~\ref{algorithm:main}. 

\quantile*

\begin{proof}[Proof of Theorem~\ref{theorem:quantile}]
We start by considering the $\phi$-regret $\reg_T(\phi^*)$, where $\phi^*=\bm{1}\otimes u\in\calS(d)$ for some $u\in\Delta(d)$ (here $\bm{1}$ denotes the length-$d$ all-one vector). Following exactly the proof of Theorem~\ref{theorem:external} until Eq.(\ref{eq:big_regret}), we have
\begin{equation*}
\reg_T(\phi^*)\leq \mathrm{PartOne}+\mathrm{PartTwo}+\mathrm{PartThree},
\end{equation*}
where $\mathrm{PartOne}\leq 4\sqrt{T+1}$, and $\mathrm{PartThree}\leq 4\sqrt{T+1}$ due to $\Diamond=0$ (the definition of $\Diamond$ is the same as in the proof of Theorem~\ref{theorem:external}). It only remains to analyze $\mathrm{PartTwo}$. 

Now consider $\Phi^{*,(h^{(S,0)},j)}$ that appears in $\mathrm{PartTwo}$. 
\begin{equation*}
\Phi^{*,(h^{(S,0)},j)}=\frac{\inner{\bar\phi^*}{h^{(S,0)}\otimes e^{(j)}}}{\inner{h^{(S,0)}\otimes e^{(j)}}{h^{(S,0)}\otimes e^{(j)}}}=d^{-1}\sum_{i=1}^{\bar d}\bar\phi^*_{i,j},
\end{equation*}
where due to the construction of Lemma~\ref{lemma:relabeling_uniform}, $\bar\phi^*_{i,j}=\frac{\phi^*_{\calI^{-1}(i),\calI^{-1}(j)}}{\abs{\calI(\calI^{-1}(j))}}$. Therefore, 
\begin{equation*}
\Phi^{*,(h^{(S,0)},j)}=d^{-1}\sum_{i=1}^{\bar d}\frac{\phi^*_{\calI^{-1}(i),\calI^{-1}(j)}}{\abs{\calI(\calI^{-1}(j))}}=\bar dd^{-1}\frac{u_{\calI^{-1}(j)}}{\abs{\calI(\calI^{-1}(j))}}.
\end{equation*}

Continuing from Eq.(\ref{eq:termtwo}),
\begin{equation*}
\mathrm{PartTwo}\leq 2\sqrt{2}\sqrt{T+1}\spar{2\sqrt{2}+4+2\sum_{j=1}^{\bar d}\abs{\Phi^{*,(h^{(S,0)},j)}}\sqrt{\log\rpar{1+\frac{\abs{\Phi^{*,(h^{(S,0)},j)}}}{\sqrt{2}\bar d(d^2\abs{\calI(\calI^{-1}(j))})^{-1}}}}},
\end{equation*}
where
\begin{align*}
\sum_{j=1}^{\bar d}\abs{\Phi^{*,(h^{(S,0)},j)}}\sqrt{\log\rpar{1+\frac{\abs{\Phi^{*,(h^{(S,0)},j)}}}{\sqrt{2}(d\abs{\calI(\calI^{-1}(j))})^{-1}}}}
=~&\bar dd^{-1}\sum_{j=1}^{\bar d}\frac{u_{\calI^{-1}(j)}}{\abs{\calI(\calI^{-1}(j))}}\sqrt{\log\rpar{1+d\frac{u_{\calI^{-1}(j)}}{\sqrt{2}}}}\\
=~&\bar dd^{-1}\sum_{\calI^{-1}(j)=1}^{d}u_{\calI^{-1}(j)}\sqrt{\log\rpar{1+d\frac{u_{\calI^{-1}(j)}}{\sqrt{2}}}}\\
\leq~& 2\sum_{i=1}^{d}u_{i}\sqrt{\log\rpar{1+\frac{u_{i}}{\sqrt{2}d^{-1}}}}.
\end{align*}
Putting things together, 
\begin{equation*}
\reg_T(\phi^*)=O\rpar{\sqrt{T}\sum_{i=1}^{d}u_{i}\sqrt{\log\rpar{1+\frac{u_{i}}{\sqrt{2}d^{-1}}}}}.
\end{equation*}

From this, bounding the quantile regret $\reg_T(\eps)$ follows from the standard trick \citep[Remark~10.12]{orabona2025modern}. $\reg_T(\eps)=\reg_T(\phi^*)$ when $u\in\Delta(d)$ is defined entrywise as follows: if the $i$-th expert is among the $\ceil{\eps d}$ best experts (denoted as the index set $I^*$), then $u_i=1/\ceil{\eps d}$; otherwise $u_i=0$. Then, 
\begin{equation*}
\reg_T(\eps)=O\rpar{\sqrt{T}\sum_{i\in I^*}\frac{1}{\ceil{\eps d}}\sqrt{\log\rpar{1+\frac{d}{\sqrt{2}\ceil{\eps d}}}}}=O\rpar{\sqrt{T\log \eps^{-1}}}.\qedhere
\end{equation*}
\end{proof}

\section{Discussion}\label{section:discussion}

In this section we discuss the strength of our result in the regime of large $d^\self_{\phi^*}$, complementing the regime of large $d^\unif_{\phi^*}$ discussed in Section~\ref{section:main}. 

Let us consider $\reg_T(\phi^*)$ where the comparing action modification rule $\phi^*$ satisfies $d^\self_{\phi^*}=d-k$. In other words, $\phi^*$ only nontrivially modifies $k$ of the $d$ experts; the rest are kept unchanged. Due to Theorem~\ref{theorem:main}, Algorithm~\ref{algorithm:main} guarantees $\reg_T(\phi^*)=\tilde O(\sqrt{kT})$ while being computationally efficient and agnostic to $k$. This improves
\begin{itemize}
\item the $\tilde O(\sqrt{dT})$ bound achieved by generic swap regret minimization \citep{blum2007external};
\item the $\tilde O(k\sqrt{T})$ bound achieved by standard internal regret minimization \citep[Chapter~4.4]{cesa2006prediction}, since the considered $\reg_T(\phi^*)$ is at most $k$ times the internal regret; and
\item the computationally inefficient and $k$-dependent approach to achieve the $\tilde O(\sqrt{kT})$ bound, which runs MWU over all zero-one stochastic matrices $\phi$ satisfying $d^\self_{\phi}=d-k$. 
\end{itemize}

Additionally, we devote special attention to the following baseline. This is a fairly straightforward corollary of \citep[Section~3.3]{roth2023learning}, despite not being formally stated there. The baseline concerns internal regret minimization: following \citep[Definition~32]{roth2023learning}, it formulates the internal regret as a special case of the subsequence regret \citep[Definition~28 and 29]{roth2023learning}, which is then handled by a generic reduction from \citep[Theorem~14]{roth2023learning}. To be more specific, this reduction requires a ``meta'' LEA algorithm as input,\footnote{Intuitively, the ``meta'' LEA algorithm treats each subsequence as a ``meta'' expert, and reweighs their importance according to their performance (i.e., subsequence regret). The higher the subsequence regret is, the larger weight the corresponding ``meta'' expert would receive. This ``scalarizes'' the multi-objective problem of minimizing all subsequence regret to a single-objective problem of minimizing the weighted sum of these regrets.} and \citep[Theorem~14]{roth2023learning} showed that the associated subsequence regret can be bounded by the coordinate-wise external regret of the input ``meta'' LEA algorithm. \cite{noarov2023high} further suggested initiating this reduction with a ``coordinate-wise, second-order adaptive'' LEA algorithm from \citep{chen2021impossible}. Combining everything, this yields a computationally efficient algorithm for our setting, whose internal regret bound is
\begin{equation*}
\sum_{t=1}^Tp_{t,i}(l_{t,i}-l_{t,j})=O\rpar{\sqrt{\sum_{t=1}^Tp^2_{t,i}}\cdot \log (dT)},\quad\forall i,j\in[1:d].
\end{equation*}
Here, $p_{t,i}$ denotes the $i$-th coordinate of the prediction $p_t\in\Delta(d)$, and $l_{t,i},l_{t,j}$ denote the $i$-th and the $j$-th coordinate of the loss vector $l_t$. This result sharpens the standard $O(\sqrt{T\log d})$ internal regret bound of \citep[Chapter~4.4]{cesa2006prediction}, modulo log factors. 

Now, consider $\reg_T(\phi^*)$ again with $d^\self_{\phi^*}=d-k$. Without knowing $k$, the above internal regret minimization baseline guarantees
\begin{align*}
\reg_T(\phi^*)&=O\rpar{\sum_{i;\phi^*(e^{(i)})\neq e^{(i)}}\sqrt{\sum_{t=1}^Tp^2_{t,i}}\cdot \log (dT)}\\
&=O\rpar{\sqrt{k}\cdot \sqrt{\sum_{i;\phi^*(e^{(i)})\neq e^{(i)}}\sum_{t=1}^Tp^2_{t,i}}\cdot \log (dT)}\tag{Cauchy-Schwarz}\\
&=O\rpar{\sqrt{k}\cdot \sqrt{\sum_{t=1}^T\sum_{i=1}^d p^2_{t,i}}\cdot \log (dT)}\\
&=O\rpar{\sqrt{kT}\cdot \log (dT)}.
\end{align*}
Although also being $\tilde O(\sqrt{kT})$, its difference with our result is the log factor: the above log factor depends on $T$, whereas the log factor in our Theorem~\ref{theorem:main} does not. Recall that in our setting, $T$ can be arbitrarily larger than $d$. 

Finally, we remark that the last line of the above bound shows the second-order adaptivity of \cite{chen2021impossible} is a slight overkill. Using the ``coordinate-wise first-order adaptive'' LEA algorithm of \cite{luo2015achieving} would guarantee $\reg_T(\phi^*)=O\rpar{\sqrt{kT}\cdot(\log d + \log\log T)}$, but we are not aware of any existing approach that completely removes the $T$-dependence of the log factor. 

\end{document}